\DeclareMathOperator{\defeq}{\stackrel{\text{def}}{\;=\;}}
\newtheorem{theorem}{Theorem}
\newtheorem{lemma}[theorem]{Lemma}
\def\eqref#1{equation~\ref{#1}}
\def\1{\bm{1}}
\DeclareMathAlphabet{\mathsfit}{\encodingdefault}{\sfdefault}{m}{sl}
\SetMathAlphabet{\mathsfit}{bold}{\encodingdefault}{\sfdefault}{bx}{n}
\newcommand{\E}{\mathbb{E}}
\newcommand{\R}{\mathbb{R}}
\definecolor{darkblue}{rgb}{0, 0, 0.5}
\renewcommand{\R}{\mathbb R}
\newcommand{\replace}[1]{#1}
\newcommand{\replacemultiple}[1]{#1}
\newcommand{\accumulate}[1]{#1}
\title{Is Model Collapse Inevitable? Breaking the Curse of\\ Recursion by Accumulating Real and Synthetic Data}
\author{Matthias Gerstgrasser\thanks{Denotes equal authorship.}\, \thanks{Harvard \& Stanford University.} \,, Rylan Schaeffer$^*$, Apratim Dey$^*$, Rafael Rafailov$^*$, Dhruv Pai \\
Stanford University \\
\texttt{\{mgerst,rschaef,apd1995,rafailov,dhruvpai\}@stanford.edu} \\
\AND
Henry Sleight\thanks{Denotes equal contribution.}\,, John Hughes$^{\ddag}$, Tomasz Korbak$^{\ddag}$, Rajashree Agrawal$^{\ddag}$\\
Constellation \\
\AND
Andrey Gromov\\
University of Maryland, College Park \\
\AND
Daniel A. Roberts\\
MIT \& Sequoia Capital\\
\AND
Diyi Yang, David Donoho \& Sanmi Koyejo\\
Stanford University\\
\texttt{\{diyiy,donoho,sanmi\}@stanford.edu} \\
}
\begin{document}

\maketitle

\begin{abstract}
The proliferation of generative models, combined with pretraining on web-scale data, raises a timely question: what happens when these models are trained on their own generated outputs? Recent investigations into model-data feedback loops proposed that such loops would lead to a phenomenon termed \textit{model collapse}, under which performance progressively degrades with each model-data feedback iteration until fitted models become useless. However, those studies largely assumed that new data \textit{\replace{replace}} old data over time, where an arguably more realistic assumption is that data \textit{\accumulate{accumulate}} over time. In this paper, we ask: what effect does accumulating data have on model collapse? 
We empirically study this question by pretraining sequences of language models on text corpora. We confirm that \replace{replacing the original real data by each generation's synthetic data does indeed tend towards model collapse}, then demonstrate that \accumulate{accumulating the successive generations of synthetic data alongside the original real data avoids model collapse}; these results hold across a range of model sizes, architectures, and hyperparameters. We obtain similar results for deep generative models on other types of real data: diffusion models for molecule conformation generation and variational autoencoders for image generation. To understand why accumulating data can avoid model collapse, we use an analytically tractable framework introduced by prior work in which a sequence of linear models are fit to the previous models' outputs. Previous work used this framework to show that \replace{if data are replaced, the test error increases with the number of model-fitting iterations}; we extend this argument to prove that \accumulate{if data instead accumulate}, the test error has a finite upper bound independent of the number of iterations, meaning \accumulate{model collapse no longer occurs}. 
Our work provides consistent empirical and theoretical evidence that \accumulate{data accumulation avoids model collapse}.
\end{abstract}

\section{Introduction}
\label{sec:introduction}

The advent of large-scale generative models such as GPT-4 \citep{openai2023gpt4}, DALL-E \citep{ramesh2022hierarchical} and Stable Diffusion \citep{rombach2022high} has revolutionized the field of artificial intelligence. These models, trained on vast web-scale datasets, exhibit remarkable capabilities in generating text, images, and other media \citep{brown2020language, saharia2022photorealistic}. However, as these models become more widely used, an increasing amount of generated data populates the web. This raises a critical question: what are the consequences of training generative models on datasets containing their own outputs?

Recent studies have investigated this question, revealing that training generative models on their own outputs can cause the performance of such models to progressively degrade with each model-fitting iteration, eventually rendering newer models useless \citep{hataya2023will,martinez2023combining,shumailov2023curse, alemohammad2023self,martinez2023towards,bertrand2023stability,briesch2023large, dohmatob2024model, dohmatob2024tale} (see Appendix \ref{app:sec:prior_work} for review and discussion of prior work). This phenomenon was consequently labeled \textit{model collapse}. Model collapse warns that democratizing access to generative models runs the risk of polluting the very data necessary to train future iterations of generative models.

\begin{figure}[t]
    \centering
    \includegraphics[width=\textwidth]{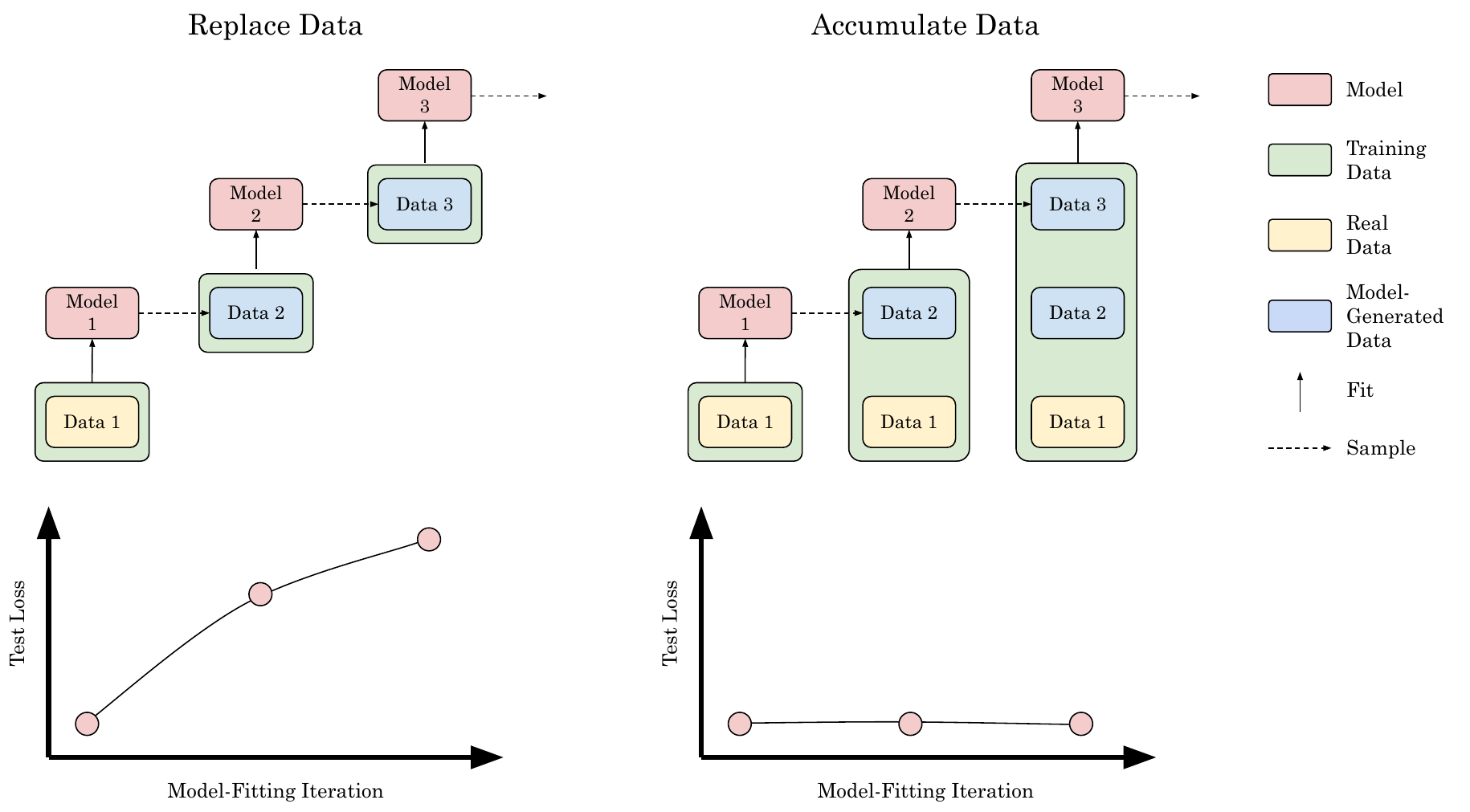}
    \caption{\textbf{Two Settings to Study Model Collapse.} Model collapse is a phenomenon where sequences of generative models trained on their own outputs progressively degrade until the latest model becomes useless. Left: Many prior works studied model collapse where data are \textit{\replace{replaced}} with each model-fitting iteration. Right: We study model collapse where data \textit{\accumulate{accumulate}} with each iteration and demonstrate accumulating data avoids model collapse.}
    \label{fig:schematic}
\end{figure}

To better understand this phenomenon
many prior works have considered a setup
that assumes each model's generated data \textit{\replace{replaces}} previous data.  %
In theory,
this leads to very natural comparisons across generations as the total number of training points for each model remains fixed.
In practice,
subsequent generations of LLMs are often trained with
increasing data over time -- e.g., 1.4 trillion tokens for Llama 1 \citep{touvron2023llama1}, 2 trillion for Llama 2 \citep{touvron2023llama2}, 15 trillion for Llama 3 -- 
in which presumably both 
human-generated and machine-generated data are accumulating in training sets collected from the internet. It was noted in some of those works \cite{hataya2023will, martinez2023combining, alemohammad2023self, bertrand2023stability, dohmatob2024tale} that model collapse can be either slowed down or negated by mixing in clean data with the generated data.

To that end, in this work we study the effect of \textit{\accumulate{accumulating}} data on model collapse, rather than replacing data. Our data-accumulating setting is, in some sense, maximally pessimistic: it considers a hypothetical future where synthetic data are uncontrollably dumped on the internet to be vacuumed up for training the next iteration of generative models. Nevertheless, we find that \textit{\accumulate{model collapse is avoided when accumulating data}}. 

We begin by studying model collapse experimentally with deep generative models trained on realistic data: transformers on causal language modeling (Sec. \ref{sec:real:subsec:language_transformers}), diffusion models on molecular conformation (Sec. \ref{sec:real:subsec:molecular_diffusion}) and variational autoencoders on images (Sec. \ref{sec:real:subsec:image_vaes}).
After confirming that \replace{replacing data at every iteration indeed causes test error to increase with the number of iterations}, we empirically find that \accumulate{accumulating synthetic data with real data avoids model collapse for all models and for all data modalities we test}.
To understand why replacing data and accumulating data have different consequences for model collapse, we turn to an analytically tractable framework of a sequence of linear models, each trained on synthetic outputs generated from the previous-iteration's fitted linear model \citep{mobahi2020self, dohmatob2024model}. Within this framework, \citet{dohmatob2024model} demonstrated that \replace{if data are replaced with each model-fitting iteration, the test error increases linearly} with the number of iterations $n$. We extend \citet{dohmatob2024model}'s analysis to prove that if \accumulate{data instead accumulate, then the test error has a finite and (to us, surprisingly) well-controlled upper bound independent of the number of model-fitting iterations}.\footnote{An approach `halfway' between the `replace' and `accumulate' settings replaces the previous dataset with a pure synthetic dataset of size $T\times i$ at the $i$-th iteration; in other words,
the comparison at each generation is now between models trained on the same number of training points.
While this \emph{halfway} setting has a milder sublinear behavior -- explicitly, the test MSE scaling is $MSE \asymp O(\log(n))$ --  
the test error does still diverge with iterations. (See Appendix~\ref{app:sec:linear_reg_replace_multiple} for more details.)
We thank Elvis Dohmatob, Yunzhen Feng, and Julia Kempe for communicating this observation.  We also consider the halfway setting as an ablation in our language modeling experiments, as detailed in Appendix~\ref{app:sec:lm_ablations}.}

Altogether, our work suggests that data accumulation may be robust to model collapse and emphasizes the importance of considering accumulating data and other real-world data dynamics in the analysis of model collapse in generative models trained on web-scale data.

\section{Accumulating Data Avoids Model Collapse in Deep\newline Generative Models}
\label{sec:realistic_generative_models}

We first investigate model collapse experimentally in several classes of generative models.
Here, and for the remainder of this manuscript, the term  {\em model collapse} refers to notably worsening error over increasing iterations of the model-data loop, while {\em avoiding model collapse} refers instead to bounded error over such iterations.
To test the effect of accumulating data on model collapse, we compare accumulating data against replacing data. We use three diverse experimental setups of causal transformers, diffusion models, and variational autoencoders trained on real text, molecular conformation, and image datasets, respectively. We find that replacing data yields model collapse for all models and all datasets, whereas accumulating data avoids model collapse.

\begin{figure}
    \centering
    \includegraphics[width=\textwidth]{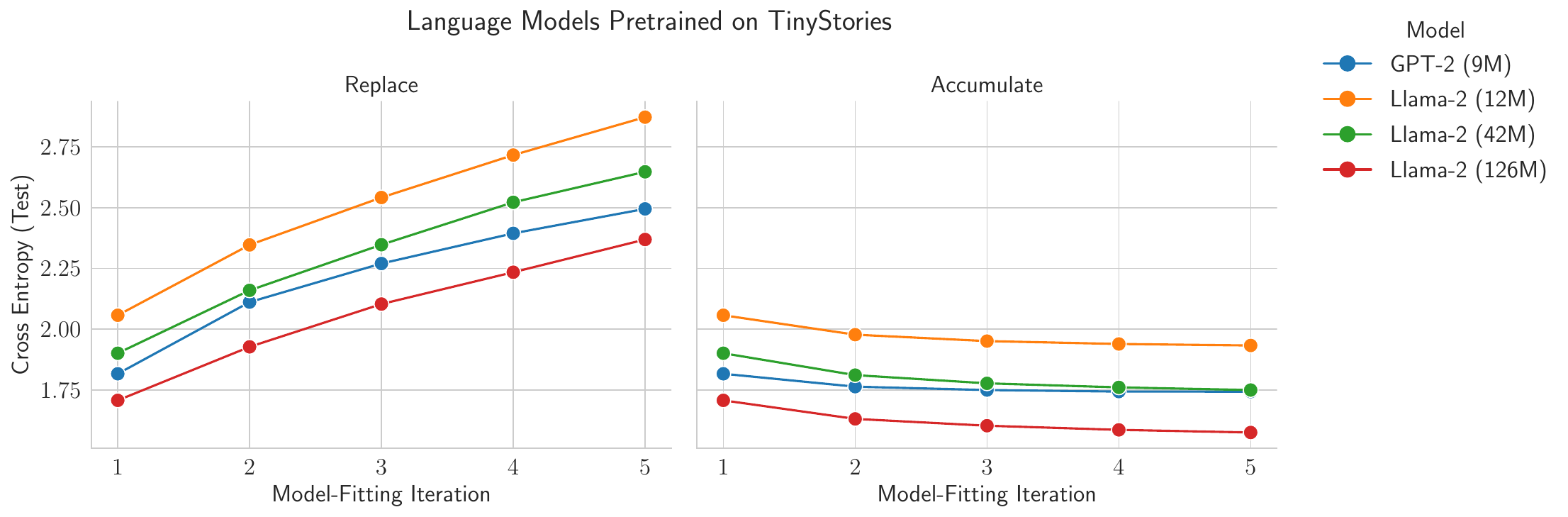}
    \caption{\textbf{Data Accumulation Avoids Model Collapse in Language Modeling.} Sequences of causal transformer-based language models are pretrained on TinyStories \citep{eldan2023tinystories}. Cross-entropy validation loss increases when repeatedly \replace{replacing} data (left), but not when \accumulate{accumulating} data (right). Synthetic data was sampled with temperature $= 1.0$.
    }
    \label{fig:language_modeling_results}
\end{figure}

\subsection{Transformer-Based Causal Language Modeling}
\label{sec:real:subsec:language_transformers}

\paragraph{Experiments} We first train causal transformers \citep{vaswani2017attention} on text data. Specifically, we pretrain 9M parameter GPT-2 \citep{radford2019language} and 12M, 42M and 125M parameter Llama2 \citep{touvron2023llama2} language models for a single epoch on TinyStories \citep{eldan2023tinystories}, a 470M token GPT-3.5/4-generated dataset of short stories at a kindergarten reading level. For each model-fitting iteration $n \geq 2$, we sample a new dataset of the same size as TinyStories from the previous iteration's language model and then either replace or concatenate the previous dataset with the newly generated dataset. In each model-fitting iteration, we then pretrain a newly initialized model on the replaced or concatenated dataset from the previous iteration. We experiment with sampling the new datasets using temperatures $0.3$ or $1.0$.  We chose this combination of architectures, scales, dataset, and sampling because the setup necessitates pretraining multiple iterations of language models -- a computationally costly endeavor -- but we also wish to study realistic conditions where generative models are high-performing and generative diverse outputs. Because small language models (below 10M parameters) pretrained on TinyStories were shown to be able to generate coherent-albeit-simple English sentences \citep{eldan2023tinystories}, this choice of architectures, scales, dataset and temperature hopefully strikes a good balance between being representative, being diverse and being computationally feasible.

\begin{figure}
    \centering
    \includegraphics[width=\textwidth]{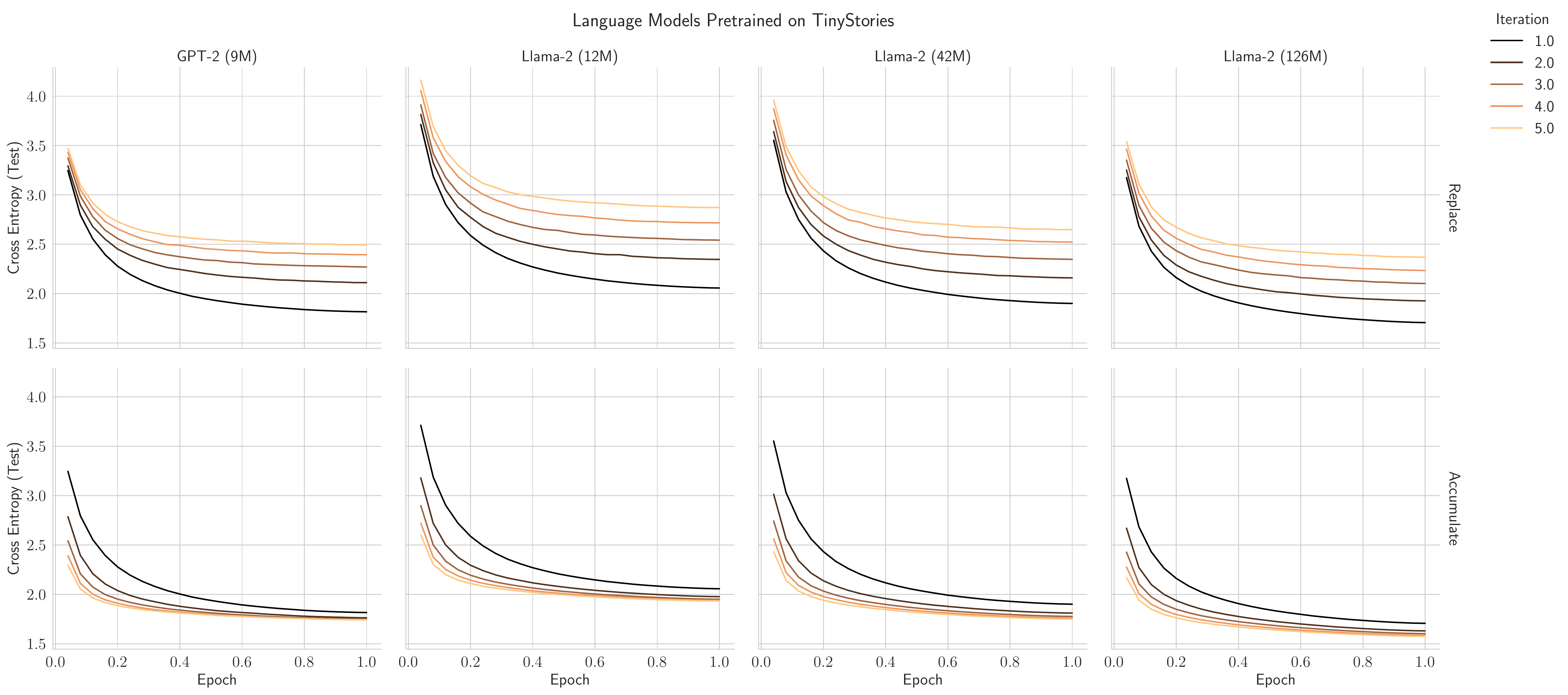}
    \caption{\textbf{Data Accumulation Avoids Model Collapse in Language Modeling.} Learning curves for individual model-fitting iterations when repeatedly \textit{replacing} data (top), and when \textit{accumulating} data (bottom). Note: Epochs correspond to more gradient steps for accumulate than replace because the number of training data grows for accumulate.
    }
    \label{fig:language_modeling_results_learning_curves}
\end{figure}

\begin{table}[h!]
    \centering
    \small
    \begin{tabularx}{\textwidth}{c|c|X}
        Model & Iteration & Sample Generation \\
        \hline 
        Llama2 (125M) & 3 \accumulate{(A)} & In the end, the crab found a smooth shell. He took it to a safe place under a tree. The crab put the shell where he found it. Tim and his mom were tired, but they were happy. They had a fun day at the beach. And they lived happily ever after. The end. \\
                    & 3 \replace{(R)} & Henry asked his Mom why the golf sounded so special. His Mom explained that the line of lumber had something special that would help. She said that if you're not sure, the lumber is special. \\
                    & 8 \replace{(R)} & Friend Stan and Millie laughed together and prepared to spend the morning together. Mamaing Grandma's possibilitant, twice would measure how much she lovedk. Everyone started to get ready when they started arguing until their mum upset. \\
        \hline
        GPT2 (9M) & 5 \accumulate{(A)} & Jack was so happy that he took care of the honey. He thought, "I care about the beautiful garden, because it is nice and clean." He started to feed the flower every day. The flower grew bigger and taller, and Jack became very happy. \\
                    & 5 \replace{(R)} & After playing, Lily got tired and quickly ran back to playing with her dolls. She opened her eyes and played with her dolls all day long. Her grandma was so happy that she screamed as she watched her look back at her original clothes and laughed. 
                    \\
                    & 10 \replace{(R)} & When she finished eating it, she tasted it all up. She said goodbye to her mom and said goodbye. Mommy smiled, feeling very proud of her. It was other. She knew that sharing is always easy to share her meal with her mom. 
                    \\
        \hline
    \end{tabularx}
    \caption{\textbf{Data Accumulation Avoids Model Collapse in Language Modeling.} Both 125M-parameter Llama2 as well as 9M GPT-2 models \replace{show decreasing quality when replacing data (R)}, but \accumulate{maintain high-quality text generations when accumulating data (A)}.}
    \label{tab:sample_generations}
\end{table}

\paragraph{Results}
We found that for all architectures, parameter counts, and sampling temperatures, as the number of model-fitting iterations increased, \replace{replacing data led to an increase in test cross entropy} (Fig. \ref{fig:language_modeling_results} top). We also found that for all architectures, parameter counts, and sampling temperatures, as the number of model-fitting iterations increased, \accumulate{accumulating data led to equal-or-lower test cross entropy} (Fig. \ref{fig:language_modeling_results} bottom). Lower temperature (0.3) led to a faster increase in test error than higher temperature (1.0) (Appendix Fig. \ref{fig:language_modeling_temp}), but the trend was consistent for both temperatures. Table~\ref{tab:sample_generations} shows samples of generated texts for GPT2 (9M) and Llama2 (125M) models at model-fitting iterations 3-5 when both accumulating and replacing data, as well as iterations 8-10 (replacing only).

\paragraph{Ablations} 
We ablate for several additional potential confounds beyond generation temperature. First, when accumulating data, subsequent model iterations are trained on larger datasets than when replacing data. To control for this, we also perform experiments in which data is replaced, but the size of the (fully synthetic) dataset is grown to match the training set size in the accumulation regime. We find that model performance still degrades (albeit at a lower rate). This is shown in Appendix~\ref{app:sec:lm_ablations}, Table~\ref{tab:lm_eval_loss}, right-most column.
Second, a possible concern could be that degrading performance when replacing data could be due to low model performance in iteration 1 (and thus the quality of the first synthetic dataset). We control for this by varying the amount of training performed in iteration 1 only and find that this has no significant impact.
Lastly, we find that our results are also consistent across varying dataset sizes and training epochs. These ablations are discussed in Appendix ~\ref{app:sec:linear_regression_numerics}.

\subsection{Diffusion Models on Molecular Conformation Data}
\label{sec:real:subsec:molecular_diffusion}

\begin{figure}
    \centering
    \includegraphics[width=0.9\textwidth]{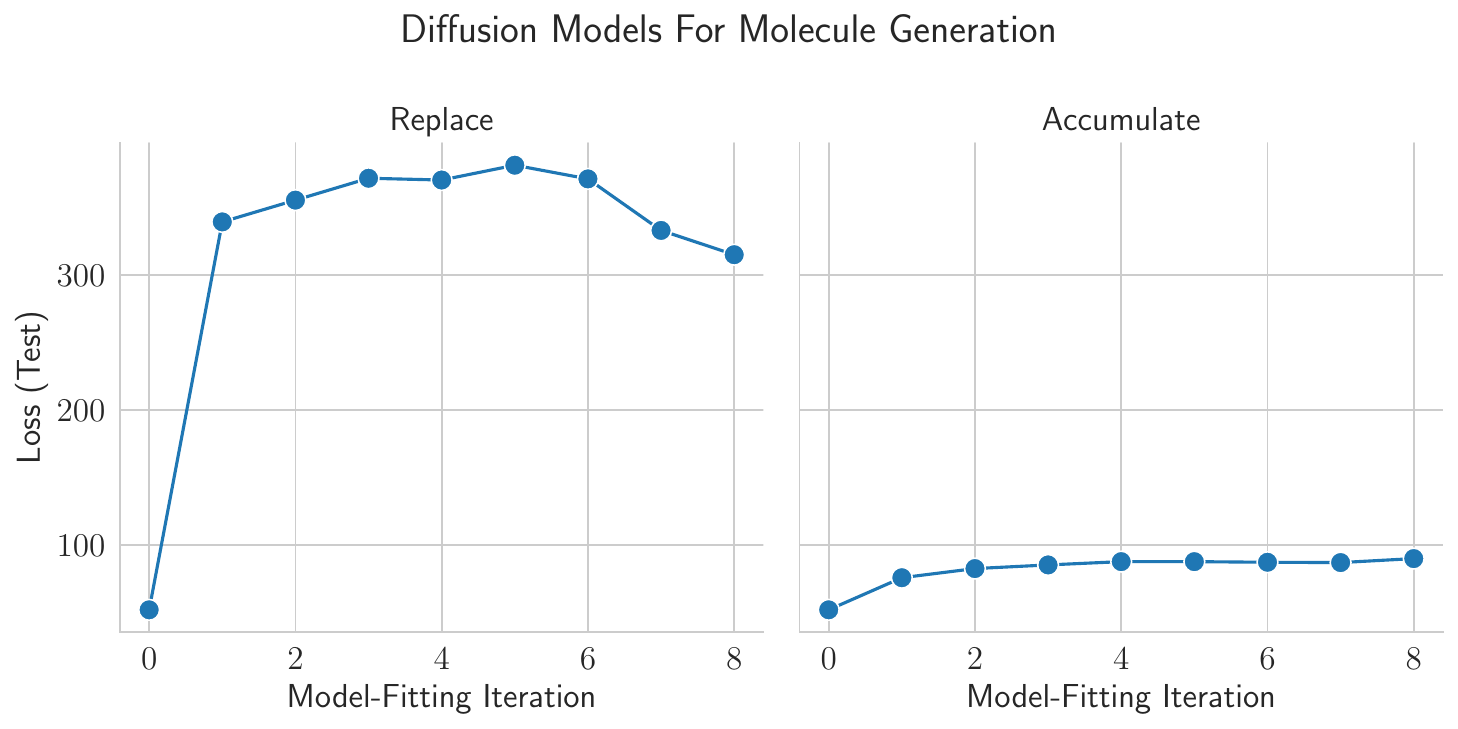}
    \caption{\textbf{Data Accumulation Avoids Model Collapse in Geometric Diffusion Modeling.} GeoDiff, a diffusion-based molecular conformation generation model, is trained on a subset of Drugs data containing molecular structures found in drugs. \replace{Test loss degrades when replacing data (left)} \accumulate{but not when accumulating data (right)}.}
    \label{fig:diff_results}
\end{figure}

\textbf{Experiments} We next train sequences of diffusion models on molecular conformation data. Specifically, we train GeoDiff \citep{xu2022geodiff}, a geometric diffusion model for molecular conformation generation, on the GEOM-Drugs \citep{axelrod2022geom} dataset. We down-sample the training split of GEOM-Drugs to $40,000$ molecular conformations, which we use as our initial training set, and perform $50$ diffusion steps for each prediction. For the loss, we use the standard loss used by GeoDiff: a weighted variational lower bound to the conditional likelihood; for more details, see \citet{xu2022geodiff}.

\textbf{Results} Over $8$ model-fitting iterations, we find \replace{test loss increases when replacing data}, matching our language model experiments, and \accumulate{test loss remains relatively constant when accumulating data} (Fig. \ref{fig:diff_results}). Unlike with language models, we found that when replacing data, performance worsens significantly in the first model-fitting iteration trained on synthetic data and does not degrade further substantially in subsequent iterations.

\subsection{Variational Autoencoders on Image Data}
\label{sec:real:subsec:image_vaes}

\paragraph{Experiments} We lastly train sequences of variational autoencoders (VAEs) \citep{kingma2013auto, rezende2014stochastic} on CelebA \citep{liu2015faceattributes}, a dataset of 200k images of human faces split between train and test sets, chosen as a balance between being a realistic dataset with many samples, color images and resolution, and computational feasibility of training multiple iterations of models on accumulating data. The loss is the standard VAE loss: reconstruction error plus the KL divergence between the encoder's output Gaussian and the isotropic Gaussian prior. See Appendix~\ref{app:sec:vae_ablations} for more experimental details.

\begin{figure}
    \centering
    \includegraphics[width=0.9\textwidth]{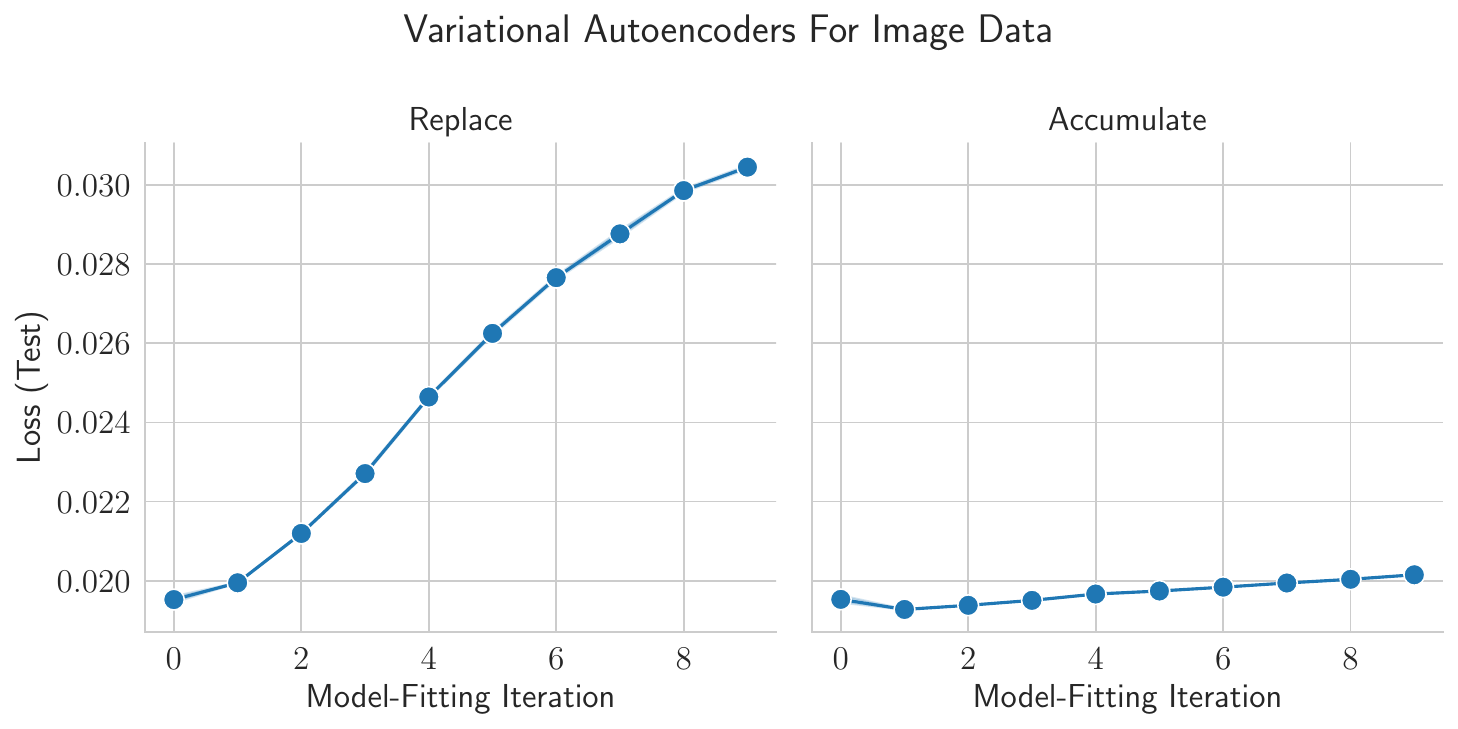}    
    \caption{\textbf{Data Accumulation Avoids Model Collapse in Variational Autoencoders for Image Generation.} Sequences of variational autoencoders (VAEs) are trained on CelebA, a large-scale dataset of human faces. \replace{Test loss degrades when replacing data (left)} \accumulate{but not when accumulating data (right)}.}
    \label{fig:vae_loss}
\end{figure}

\begin{figure}
    \centering
    \includegraphics[width=0.325\textwidth]{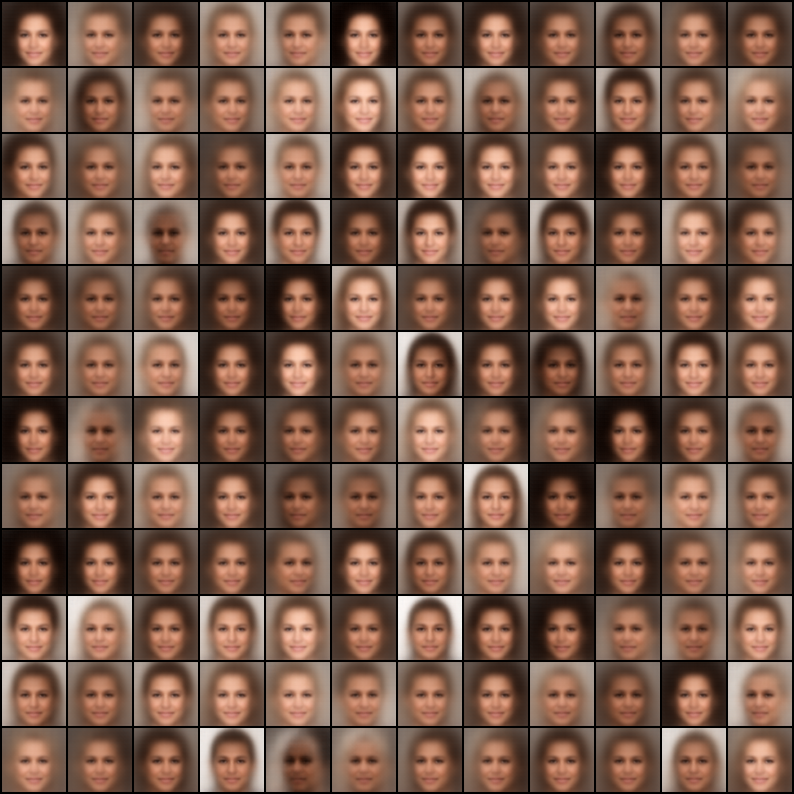}
    \includegraphics[width=0.325\textwidth]{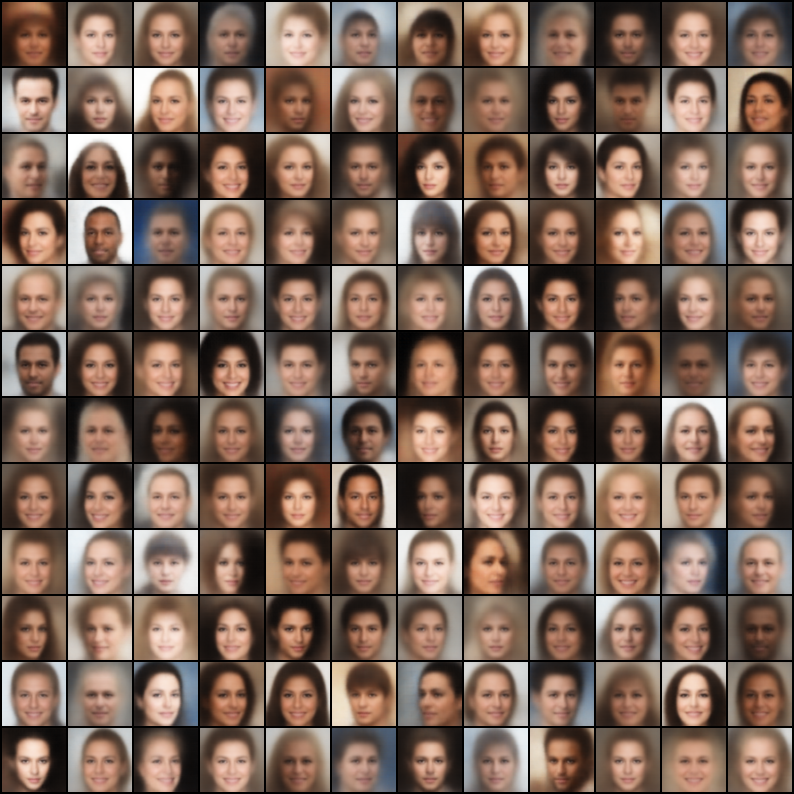}
    \includegraphics[width=0.325\textwidth]{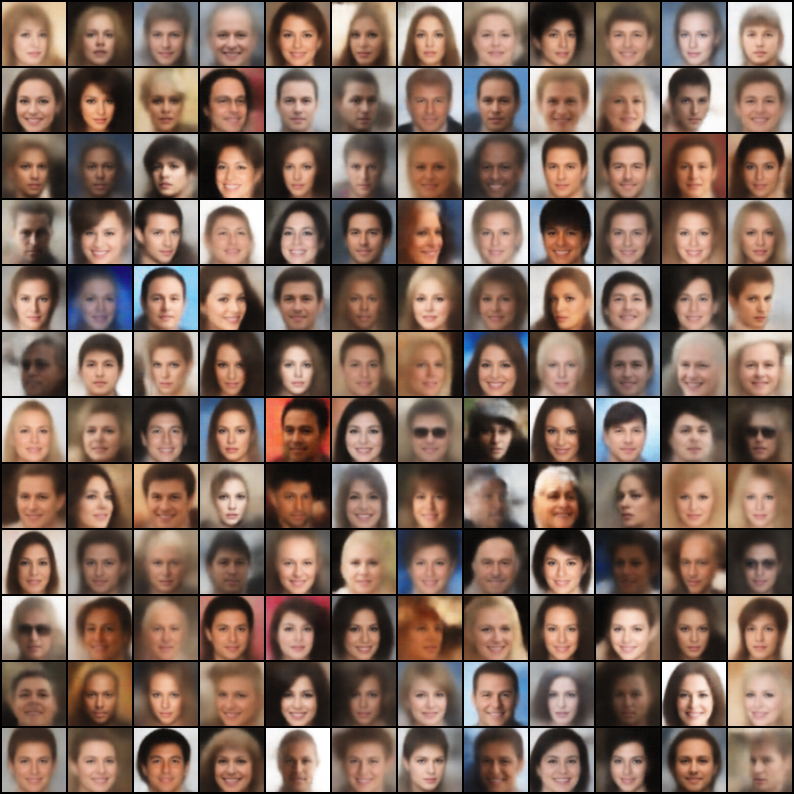}
    \caption{\textbf{Sampled Images from } Left: \replace{Replacing} data with data generated by the previous iteration's newly trained VAE yields lower quality and eventually leads to complete mode collapse. Middle: \accumulate{Accumulating} data with data generated by the previous iteration's newly trained VAE preserves the quality and diversity of generated data across iterations. Right: Baseline samples after 100 training epochs on the dataset.}
    \label{fig:vae_samples}
\end{figure}

\paragraph{Results} We find that \replace{replacing data at each iteration again exhibits model collapse}: the test error rises swiftly with each additional iteration, and each iteration yields lower quality and less diverse generated faces until all model generations represent a single mode as shown in the left panel of Figure \ref{fig:vae_samples}. In contrast, \accumulate{accumulating data at each iteration significantly slows model collapse}: the test error increases significantly slower with each additional iteration. While the diversity of generations does go down as compared in the middle and right panel of Fig. \ref{fig:vae_samples}, it still represents major axes of variation in the dataset, such as gender, but no longer seems to generate other details, along more minor axis of the data manifold, such as glasses and accessories. We discuss further analysis of VAE reconstructions in Appendix~\ref{app:sec:vae_ablations}.

Interestingly, unlike language modeling, the test error of accumulating data does increase with the number of iterations (albeit much more slowly than with replacing data). We also note that \cite{martinez2023combining} found slightly contradictory evidence, specifically that a different architecture on a much smaller dataset exhibits fast performance deterioration even with accumulating data. Understanding under what conditions and why these discrepancies exist is an interesting direction we leave for future research.

\section{Accumulating Data Avoids Model Collapse in Linear Models}
\label{sec:theoretical_setup}

To gain mathematical understanding and intuition, we employ an analytical framework 
introduced in prior work \citep{mobahi2020self, dohmatob2024model} to understand 
the difference between data accumulation and data replacement. We will show that it predicts the 
same types of test error behaviors for these two data-use strategies
that were measured empirically.
The framework considers a sequence of linear models that are fit to the synthetic data sampled from the linear generative model model based on the previously fit linear models.
Within this framework, \citet{dohmatob2024model} showed that \replace{if data are replaced} across model-fitting iterations, \replace{then the test squared error increases linearly}\footnote{To echo an earlier footnote, an approach `halfway' between the `replace' and `accumulate' approaches would replace the previous dataset with a pure synthetic dataset of size $iT$ at the $i$-th iteration. Analyzing this goes mostly in parallel, except the $1/i^2$ mentioned in running text now becomes $1/i$ for the `halfway' approach. Consequently, the MSE scaling becomes $MSE \asymp O(\log(n))$; the `halfway' approach with pure synthetic data but more of it, again has test error growing unboundedly with iterations. Thanks to Elvis Dohmatob, Yunzhen Feng and Julia Kempe for communicating this observation. See Appendix \ref{app:sec:linear_reg_replace_multiple} for an extended discussion.} with the number of iterations $n$. Here, we extend \citet{dohmatob2024model}'s argument to show that \accumulate{if data instead accumulate across model-fitting iterations, then} the test squared error is upper bounded by a relatively small constant, meaning \accumulate{model collapse is avoided}\footnote{In this theoretical section, we identify the term \textit{model collapse} with the situation where test error diverges to infinity (at any rate) as iterations progress. Other authors may employ similar terminology while identifying it with different properties of test error. For example, \cite{alemohammad2023self} use the term MAD to refer to the situation where the distance between the distribution of the original data and that of the subsequent generative models grow farther apart, without necessarily diverging.}.

The content in this section relies heavily on the framework and pioneering contributions of \citet{dohmatob2024model}. Our contribution is to study a  {\it different way to use synthetic data in training}, namely accumulate, which seems to better align with certain real-world considerations. We show that our empirical results could have been anticipated on theoretical grounds, by applying the same analysis framework as in \citet{dohmatob2024model}, but instead to this specific training dataset pattern. We use the same framework to analyze some other ways that synthetic data might have be used, such as replace,  again the theory aligns with many empirical results.

\subsection{Notation and Preliminaries}

\paragraph{Original Data Distribution.} 

We adapt notations from \cite{dohmatob2024model}. Define the distribution $P_{\Sigma,w,\sigma^2}$ on $\mathbb{R}^d \times \mathbb{R}$ given by $(x, y) \sim P_{\Sigma,w,\sigma^2} \quad \text{iff} \quad$:
\begin{align*}
\text{(Input)} \quad & x \sim \mathcal{N}(0, \Sigma), \\
\text{(Noise)} \quad & \epsilon \sim \mathcal{N}(0, \sigma^2), \text{ independent of } x, \\
\text{(Label)} \quad & y = x \cdot w^* + \epsilon.
\end{align*}
The positive integer $d$ is the input-dimension, the matrix $\Sigma \in \mathbb{R}^{d \times d}$ is the true covariance structure of the input $x$, the vector $w^*$ is the true linear relationship used to generate the original data and the scalar $\sigma$ is the level of label noise. We start at iteration $n=1$ with $T$ initial independent data points $(x_i, y_i)$ each following $P_{\Sigma, w^*, \sigma^2}$, that is, $y_i = x_i\cdot w^* + \epsilon_i$ for each $i=1,2,\cdots, T$. We form the design matrix $X\in\R^{T\times d}$ with $x_1^\top,\cdots, x_T^\top$ as rows. We also form the vectors $Y$ and $E$ with $i$-th coordinate $y_i$ and $\epsilon_i$ respectively. In whatever follows, we will assume that $X$ has full column rank, i.e., $T\geq d$, $X^\top X$ is invertible and the model is underparameterized.

\paragraph{Synthetic Data Generation Process.}

We generate synthetic data from the following sequence of distributions

\begin{align*}
P_{\Sigma,w^*,\sigma^2} \to P_{\Sigma,\hat{w}_1,\sigma^2} \to \ldots \to P_{\Sigma,\hat{w}_n,\sigma^2},
\end{align*}
where $n \in \mathbb{N}$ is the number of iterations. The scheme is outlined as follows.
\begin{itemize}
\item For $n=1$:
\begin{itemize}
\item Accumulating Covariates/Features: $\tilde{X}_1 \defeq X$
\item Accumulating Targets: $\tilde{Y}_1 \defeq \hat{Y}_1 \defeq Xw^* + E_1$, where $E_1 \defeq E \sim \mathcal{N}(0, \sigma^2 I_T)$
\item Fit linear model: $\hat{w}_1 = \tilde{X}_1^{\dagger} \tilde{Y}_1$
\item Sample synthetic data for the next iteration: $\hat{Y}_2 \defeq X\hat{w}_1 + E_2$, where $E_2 \sim \mathcal{N}(0, \sigma^2 I_T)$
\end{itemize}
\item For $n \geq 2$:
\begin{itemize}
\item Accumulating Covariates/Features: $\tilde{X}_n^\top = [\tilde{X}_{n-1}^\top; X^\top] \in \mathbb{R}^{d \times nT}$
\item Accumulating Targets: $\tilde{Y}_n^\top = [\tilde{Y}_{n-1}^\top; \hat{Y}_n^\top] \in \mathbb{R}^{1 \times nT}$
\item Fit linear model: $\hat{w}_n \defeq \tilde{X}_n^{\dagger} \tilde{Y}_n$
\item Sample synthetic data for the next iteration: $\hat{Y}_{n+1} \defeq X \hat{w}_n + E_{n+1}$, where $E_{n+1} \sim \mathcal{N}(0, \sigma^2 I_T)$
\end{itemize}
\end{itemize}
Here, for a matrix $A$ with full column rank, $A^\dagger=(A^\top A)^{-1}A^\top$ is the Moore-Penrose pseudo-inverse of $A$. The noise terms $E_1, E_2, \ldots, E_n$ are independent of each other and of the covariates/features. Since $X$ has full column rank, so does $\tilde{X}_n$ for every $n\geq 1$.

\paragraph{Test Error.}

We are interested in the dynamics of the test error $E_{\text{test}}(\hat{w}_n)$ of this sequence of linear model $\hat{w}_1, \hat{w}_2, ...$. Note that evaluation of the model is done on the true distribution $P_{\Sigma,w^*,\sigma^2}$, even though the model is trained on the accumulated synthetic data. For any linear estimator $\hat{w}$ computed from the training data, we measure test error in the standard way:

\begin{equation}
    E_{test}(w) \defeq \mathbb{E}\left[(x_{test}^T w - y_{test})^2 \right] - \sigma^2 = \E[\|w - w^*\|_{\Sigma}^2]
\end{equation}
where the expectation is taken over the training data and $(x_{test}, y_{test})\sim P_{\Sigma,w^*,\sigma^2}$ independent of the training data.

\paragraph{A Note on Extensions to Ridge Regression and Kernel Methods.}

To reiterate a comment made previously by \citet{dohmatob2024model}, although we present our results in the context of ordinary linear regression in $\mathbb{R}^d$, our analysis can be readily extended to ridge regression and the kernel setting \citep{caponnetto2007optimal, simon2021neural, cui2021generalization, wei2022more}. We focus here on a simple useful model for studying model collapse.

\begin{figure}
    \centering
    \includegraphics[width=\textwidth]{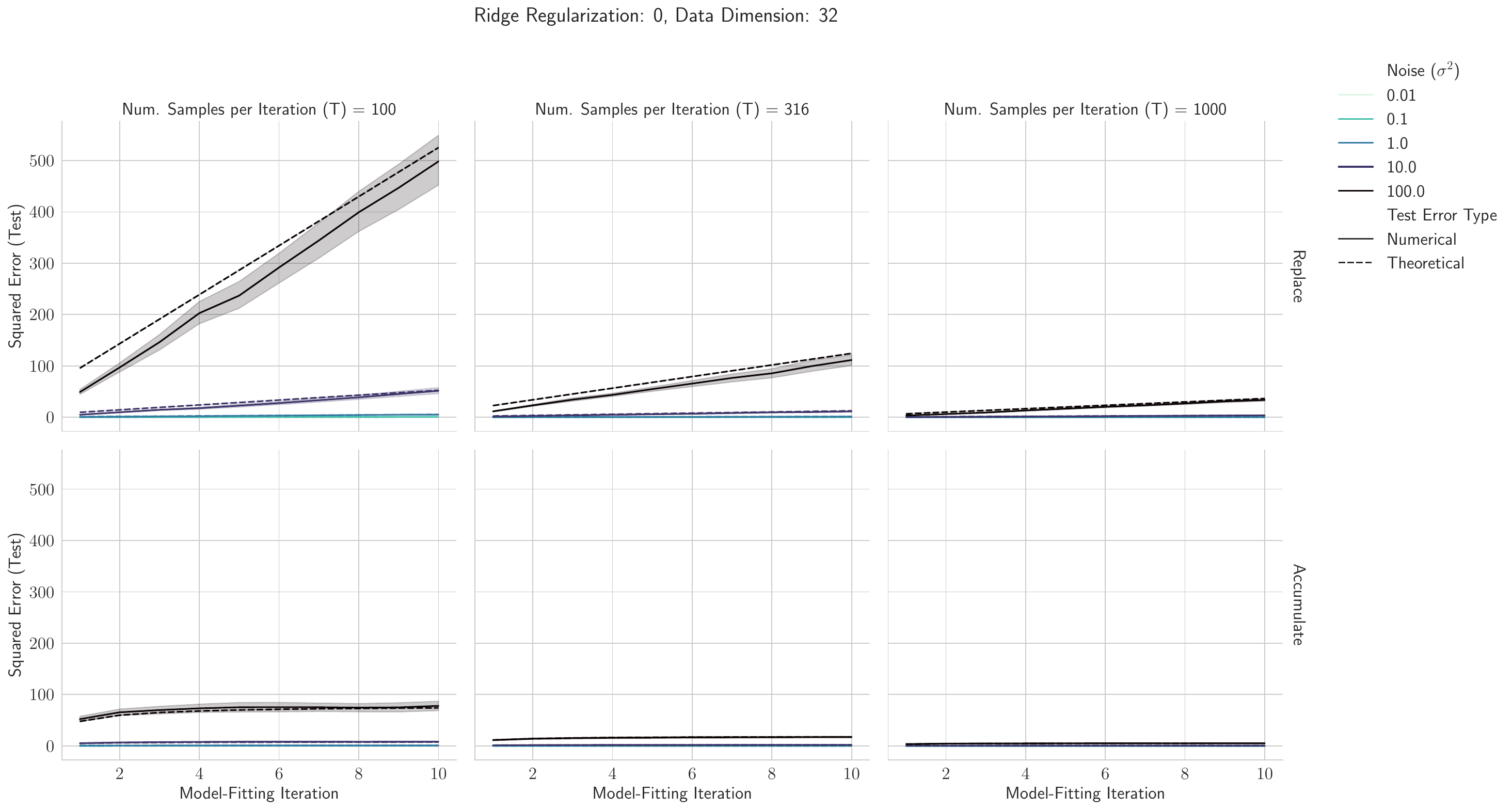}
    \caption{\textbf{Accumulating Data Avoids Model Collapse in Linear Regression.} We consider sequences of linear models recurrently fit to generated targets by previous iterations of models. Top: If each linear model is fit to the generated targets of \textit{only} the preceding linear model, i.e., data are replaced, then the test error grows linearly with the number of iterations $n$. Bottom: If each linear model is instead fit to the generate targets of \textit{all} the preceding linear models, i.e., data accumulate, then the test error has a finite upper bound independent of the number of iterations. This suggests that data accumulation might be a robust solution for mitigating model collapse. For log test error and higher iterations, see Appendix Fig. \ref{app:fig:accumulating_vs_nonaccumulating_isotropic_features_log_linear}.}
    \label{fig:accumulating_vs_nonaccumulating_isotropic_features}
\end{figure}

\subsection{Precise Test Error Characterization Under Accumulating Data}
\label{sec:exact_test_error_characterization}

Our goal is to establish an analytic formula for the test error of the $n$th model in the data accumulation setting. We begin by characterizing the relationship between the fitted linear parameters $\hat{w}_n$ and the true parameters $w^*$. We remind the reader that we assume that $X$ has full column rank, i.e., $X^\top X$ is invertible. Proofs are deferred to App. \ref{app:sec:proofs}.

\begin{theorem}
In the data accumulation setting, $\forall n \geq 1$, the fitted linear parameters $\hat{w}_n$ can be expressed as:
\begin{equation}
\hat{w}_n = w^* + (X^\top X)^{-1} X^\top \left(\sum_{i=1}^n \frac{E_i}{i}\right)
\end{equation}
where, recall, $w^*$ is the true parameter, $X$ is the original design matrix, and $E_i$ is the extra noise added at the $i$'th iteration.
\label{theorem:ridgeless_fitted_linear_parameters}
\end{theorem}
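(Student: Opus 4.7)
The key structural observation is that because the design matrix $X$ is reused at every iteration, the accumulated design matrix $\tilde{X}_n$ is just $n$ vertically stacked copies of $X$. Consequently $\tilde{X}_n^\top \tilde{X}_n = n X^\top X$ (and this is invertible by the full column rank assumption), while $\tilde{X}_n^\top \tilde{Y}_n = \sum_{i=1}^n X^\top \hat{Y}_i$. So the fitted parameter admits the clean representation
\[
\hat{w}_n \;=\; (\tilde{X}_n^\top \tilde{X}_n)^{-1}\tilde{X}_n^\top \tilde{Y}_n \;=\; \frac{1}{n}(X^\top X)^{-1}X^\top \sum_{i=1}^n \hat{Y}_i.
\]
Plugging in $\hat{Y}_1 = Xw^* + E_1$ and $\hat{Y}_i = X\hat{w}_{i-1}+E_i$ for $i\geq 2$, using $(X^\top X)^{-1}X^\top X = I$, gives the recursion
\[
n\,\hat{w}_n \;=\; w^* \;+\; \sum_{j=1}^{n-1}\hat{w}_j \;+\; (X^\top X)^{-1}X^\top \sum_{i=1}^n E_i.
\]

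\textbf{Induction.} I then prove the claimed closed form by induction on $n$. The base case $n=1$ is immediate: $\hat{w}_1 = (X^\top X)^{-1}X^\top(Xw^*+E_1) = w^* + (X^\top X)^{-1}X^\top E_1$, matching the formula. For the inductive step, substitute the assumed form $\hat{w}_j = w^* + (X^\top X)^{-1}X^\top \sum_{i=1}^j E_i/i$ for all $j<n$ into the recursion. The deterministic part yields $w^* + (n-1)w^* = n w^*$ on the right, and dividing by $n$ produces the required $w^*$ on the left. All that remains is to show the noise contribution equals $(X^\top X)^{-1}X^\top \sum_{i=1}^n E_i/i$.

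\textbf{The combinatorial identity.} The main computation is swapping the order of summation in the double sum $\sum_{j=1}^{n-1}\sum_{i=1}^{j} E_i/i$: for each $i\in\{1,\dots,n-1\}$ the index $j$ ranges over $\{i,\dots,n-1\}$, so the term $E_i/i$ appears exactly $n-i$ times, giving
\[
\sum_{j=1}^{n-1}\sum_{i=1}^{j}\frac{E_i}{i} \;=\; n\sum_{i=1}^{n-1}\frac{E_i}{i} \;-\; \sum_{i=1}^{n-1}E_i.
\]
Adding the extra $\sum_{i=1}^{n}E_i$ contributed by the recursion cancels the $-\sum_{i=1}^{n-1}E_i$ and leaves behind $E_n$, yielding $n\sum_{i=1}^{n-1} E_i/i + E_n = n\sum_{i=1}^n E_i/i$. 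Dividing through by $n$ closes the induction.

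\textbf{Main obstacle.} There is no real analytic difficulty here; the only step requiring care is the double-sum reindexing above, which must land exactly on the harmonic weighting $1/i$ that gives the theorem its punch. Everything else is bookkeeping enabled by the fact that $\tilde{X}_n$ is block-stacked copies of $X$, which makes the pseudo-inverse collapse to a simple average.
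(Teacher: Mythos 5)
Your proof is correct and follows essentially the same route as the paper's: both exploit that $\tilde{X}_n^\top\tilde{X}_n = nX^\top X$ to reduce $\hat{w}_n$ to an average of $(X^\top X)^{-1}X^\top\hat{Y}_i$, then run a (strong) induction whose only substantive step is reindexing the double sum $\sum_{j}\sum_{i\le j}E_i/i$ to recover the harmonic weights $1/i$. Your version of that last step, via $\sum_{j=1}^{n-1}\sum_{i=1}^{j}E_i/i = n\sum_{i=1}^{n-1}E_i/i - \sum_{i=1}^{n-1}E_i$, is just a tidier phrasing of the paper's coefficient-counting argument.
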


\begin{theorem}
For an $n$-fold synthetic data generation process with $T \geq d + 2$ samples per iteration and isotropic features ($\Sigma \defeq I_d$), the test error for the ridgeless linear predictor $\hat{w}_n$ learned on the accumulated data up to iteration $n$ is given by:
\begin{equation}
E_{\text{test}}^{\text{Accum}}(\hat{w}_n) = \frac{\sigma^2 d}{T-d-1} \left(\sum_{i=1}^n \frac{1}{i^2} \right) \leq \frac{\sigma^2 d}{T-d-1} \times \frac{\pi^2}{6}
\end{equation}
where, recall, $\sigma^2$ is the noise variance of the fake data generation process, $d$ is the input dimension, and $T$ is the number of samples (i.e., data points) added per iteration.
\label{thm:warmup_ridgeless_isotropic}
\end{theorem}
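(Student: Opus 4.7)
The plan is to reduce the test error computation to a variance calculation plus the classical inverse-Wishart moment, using Theorem~\ref{theorem:ridgeless_fitted_linear_parameters} as the starting point. Since $\Sigma = I_d$, we have $E_{\text{test}}(\hat{w}_n) = \mathbb{E}\|\hat{w}_n - w^*\|^2$, and Theorem~\ref{theorem:ridgeless_fitted_linear_parameters} gives the closed form
\begin{equation*}
\hat{w}_n - w^* = (X^\top X)^{-1} X^\top S_n, \qquad S_n \defeq \sum_{i=1}^n \frac{E_i}{i}.
\end{equation*}
So the first step is to expand the squared norm into a quadratic form and condition on $X$.

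Next, I would compute $\mathbb{E}[\|\hat{w}_n - w^*\|^2 \mid X]$. Writing out the quadratic form gives $S_n^\top X (X^\top X)^{-2} X^\top S_n$. Because the noise vectors $E_1,\dots,E_n$ are independent across iterations and each $E_i \sim \mathcal{N}(0,\sigma^2 I_T)$, the sum $S_n$ is Gaussian with covariance $\sigma^2 (\sum_{i=1}^n 1/i^2) I_T$. Applying the standard identity $\mathbb{E}[u^\top M u] = \mathrm{tr}(M \,\mathrm{Cov}(u))$ for a zero-mean Gaussian $u$, and using the cyclic property of the trace together with $X(X^\top X)^{-2}X^\top$ having trace equal to $\mathrm{tr}((X^\top X)^{-1})$, yields
\begin{equation*}
\mathbb{E}\bigl[\|\hat{w}_n - w^*\|^2 \,\big|\, X\bigr] = \sigma^2 \left(\sum_{i=1}^n \frac{1}{i^2}\right) \mathrm{tr}\bigl((X^\top X)^{-1}\bigr).
\end{equation*}

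The third step is to take expectation over $X$. Since the rows of $X$ are i.i.d.\ $\mathcal{N}(0,I_d)$, the matrix $X^\top X$ follows a Wishart distribution $W_d(I_d,T)$, and the classical inverse-Wishart first-moment formula gives $\mathbb{E}[\mathrm{tr}((X^\top X)^{-1})] = d/(T-d-1)$, which is well-defined precisely because of the assumption $T \geq d + 2$. Substituting yields the equality in the theorem statement. Finally, the upper bound follows from the Basel sum $\sum_{i=1}^\infty 1/i^2 = \pi^2/6$.

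The only technical subtlety is confirming that the conditional variance calculation can be cleanly separated from the $X$-expectation, which is immediate because $E_1,\dots,E_n$ are independent of $X$ by construction. The inverse-Wishart moment is the one piece of input beyond Theorem~\ref{theorem:ridgeless_fitted_linear_parameters}, and the sample-size condition $T \geq d+2$ enters precisely to guarantee its finiteness; this is the step I would be most careful to cite or to verify, but it is not an obstacle since it is a classical identity.
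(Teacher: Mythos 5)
Your proposal is correct and follows essentially the same route as the paper's proof: both start from Theorem~\ref{theorem:ridgeless_fitted_linear_parameters}, expand the quadratic form $S_n^\top X(X^\top X)^{-2}X^\top S_n$, use independence of the $E_i$ and the trace identity to reduce to $\sigma^2(\sum_{i=1}^n i^{-2})\,\mathbb{E}[\mathrm{tr}((X^\top X)^{-1})]$, and invoke the inverse-Wishart moment (Lemma~\ref{lemma:tr_inv_cov_eq_prefactor_inv_cov}) together with the Basel sum. Your explicit conditioning on $X$ and identification of the covariance of $S_n$ is a slightly tidier bookkeeping of the same calculation, not a different argument.
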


How does test error with accumulating data compare against test error with replacing data? Under otherwise identical assumptions, \citet{dohmatob2024model} proved \replace{in the data-replacing setting} that the test error is given by\footnote{For notational simplicity, we assume that \citet{dohmatob2024model}'s $T_0 \defeq T$ and $\sigma_0 \defeq \sigma$.}:
\begin{equation}
E_{\text{test}}^{\text{Replace}}(\hat{w}_n) = \frac{\sigma^2 d}{T - d -1} \times \color{red}n \color{black}
\label{dohmatob2024thm4p1}
\end{equation}

When data are replaced, the test error grows linearly with the number of iterations $n$ (Fig \ref{fig:accumulating_vs_nonaccumulating_isotropic_features} top), with the rate of growth determined by a noise-to-signal ratio: the amount of noise per dimension $\sigma^2$ times the number of dimensions $d$, adjusted by the (per-iteration) sample size $T$. \accumulate{In contrast, when data accumulate}, Theorem \ref{thm:warmup_ridgeless_isotropic} shows the test error is upper bounded \textit{regardless of the number of iterations $n$}:
\begin{equation*}
E_{\text{test}}^{\text{Accum}}(\hat{w}_n) \leq \frac{\sigma^2 d}{T-d-1} \times \color{red}\frac{\pi^2}{6}\color{black}
\end{equation*}

This striking difference can be intuitively explained by the differences in the way data are handled across iterations. In the \replace{data replacement setting}, because previous data were discarded, the model is more strongly affected by the new noise that each iteration of generated data introduces, and adds that to the effects experienced in earlier iterations. But in the \accumulate{data accumulation setting}, because iteration $i$ contributes fraction $1/i$ to the training dataset, the additional noise from the $i$th iteration of synthetic data has its effect on the model MSE shrunken proportional to $1/i^2$ (due to squared error). The summability of $1/i^2$ prevents the test error from growing indefinitely. This suggests that accumulating generated data with real data can indeed avoid model collapse.

\subsection{Numerical Confirmation of Analytical Results}

To confirm the analytical results, we numerically simulate the setup. The numerics almost perfectly matched the analytics (Fig. \ref{fig:accumulating_vs_nonaccumulating_isotropic_features}): \replace{when data are replaced, the test error grows with the number of iterations $n$}, with the prefactor set by the noise-to-signal ratio $\sigma^2 d / (T - d -1)$, but \accumulate{when data accumulate, the test error rapidly plateaus} with the prefactor similarly set. For log test error and higher model-fitting iterations, see Appendix Fig. \ref{app:fig:accumulating_vs_nonaccumulating_isotropic_features_log_linear}.

\section{Discussion}
\label{sec:discussion}

This work explored the phenomenon of model collapse, an important concern as AI-generated content permeates the internet and finds its way into future training datasets. Prior work has shown that training on model outputs can lead to degraded performance \citep{martinez2023combining,martinez2023towards,shumailov2023curse,alemohammad2023self,hataya2023will,bertrand2023stability,briesch2023large,dohmatob2024model,dohmatob2024tale}, implying that future model training faces a difficult challenge of ensuring strict training dataset hygiene.
\textbf{For a significantly more thorough discussion of related work, please see Appendix \ref{app:sec:prior_work}.}

Our findings extend these prior works to show that if data \accumulate{accumulates} and models train on a mixture of ``real'' and synthetic data, \accumulate{model collapse no longer occurs}. We show this both experimentally on causal transformers for language modeling, diffusion models for molecule generation, and variational auto-encoders on image data as well as theoretically for linear regression. \textbf{Together, these results strongly suggest that the ``curse of recursion" may not be as dire as had been portrayed \--- provided we accumulate synthetic data alongside real data, rather than replacing real data by synthetic data only.}

Looking to the future, many questions worth investigating remain. For instance, in future work we would like to explore different data generation and accumulation regimes, such as (1) additional ``real'' data being introduced in each model-fitting iteration and (2) different schedules of how much synthetic data is generated at each iteration and (3) human-filtering of generated data, e.g., as done in RLHF. Additionally, we note that in all our experiments, the synthetic dataset is generated by \textit{sampling} from the previous model, i.e., with some stochasticity; in future work, we would like to explore also what happens if data is generated deterministically, e.g. with temperature 0 in a typical language model.

Lastly, it is worth noting that ``model collapse" -- as a term of art -- has been used in various ways by various researchers; so care is required in comparing claims across articles. In reviewing the literature, we identified at least four related phenomena: (0) unbounded test error blowup (as here); (1) modal collapse --- collapse to one (or a few) modes; (2) collapse to uniformity; and (3) amplification of artifacts introduced by models fit to previous synthetic data. Future work should map out what factors cause which to occur and what preventative strategies are effective at addressing each.

\section{Acknowledgements}

The content of this paper does not necessarily reflect the position or the policy of any of the funding agencies/entities. No endorsement should be inferred. M.G. acknowledges support through a grant from the Cooperative AI Foundation. R.S. acknowledges support from Stanford Data Science and from OpenAI's Superalignment Fast Grant Research Fellowship.  A.G. acknowledges support from the NSF CAREER grant DMR-2045181, the Sloan Foundation, and by the Laboratory
for Physical Sciences through the Condensed Matter Theory Center.
D.R. acknowledges support from the National Science Foundation under Cooperative Agreement PHY-2019786 (the NSF \href{http://iaifi.org/}{AI Institute for Artificial Intelligence and Fundamental Interactions}) and appreciates both the sanction and support of Sequoia Capital.
S.K. is partially supported by NSF III 2046795, IIS 1909577, CCF 1934986, NIH 1R01MH116226-01A, NIFA award 2020-67021-32799, the Alfred P. Sloan Foundation, and Google Inc.

\clearpage

\bibliography{colm2024_conference}
\bibliographystyle{colm2024_conference}

\clearpage
\appendix
\section{Summarization and Discussion of Prior and Related Work}
\label{app:sec:prior_work}

\textbf{Prior Empirical Work} A growing body of recent work has investigated the phenomenon of iteratively training models on data generated by previous models, e.g., \citet{hataya2023will,martinez2023combining,shumailov2023curse, alemohammad2023self,martinez2023towards,bertrand2023stability,briesch2023large, dohmatob2024model, dohmatob2024tale} and (in a different context) \citet{taori2023data}.
\citet{hataya2023will} and \citet{martinez2023towards} conducted experiments replacing real training data with generated data at each iteration, assuming that the dataset size remains fixed over time. They found that this iterative retraining procedure can lead to model degradation if the proportion of synthetic data becomes too high. Similarly, \citet{shumailov2023curse} ran experiments with Gaussian mixture models, VAEs, and language models in which the total number of samples per iteration was held constant, and the samples always originated with the previous model rather than aggregating over time. Building on this work, \citet{alemohammad2023self} considered three treatments of data: fully replacing real data with synthetic data, augmenting a fixed real dataset with additional synthetic data, and mixing new real data with synthetic data at each iteration. In almost all of their experiments, they drew a fixed size dataset from the most recent model at each iteration, without accumulating data. \citet{bertrand2023stability} also assumed that dataset size and mixing proportions are constant over time in their theoretical stability analysis and empirical validation. %

\paragraph{Prior Theoretical Work} Over the last few years, there has been significant research effort contributing to our theoretical understanding of model behavior when synthetic data are integrated into training. The most closely related works to ours are \citet{dohmatob2024model} and \citet{dohmatob2024tale}; of course, the inspiration for the linear regression model studied in this paper directly comes from \citet{dohmatob2024model}. \citet{dohmatob2024model} performs an in-depth analysis of high dimensional linear and ridge regression when the training data used per iteration are generated from the previous iteration's fitted model. They are able to conclude that the test error grows linearly with the iteration count in their setup, as well as derive more interesting and more nuanced results using random matrix theory. They also discuss how to mitigate model collapse through optimal regularization both when the training data are noise-free and noisy versions of the previous model's synthetic outputs. A related noise-free setup was studied by \citet{mobahi2020self} in the case of self-distillation. Although \cite{mobahi2020self} considers a more general setup with ridge regression as a special case, they use \textit{noiseless} predictions from the previous model as the training data for the next model, and show that eventually, the predictions shrink to zero. Through this, they highlight that self-distillation induces regularization in the function space, which initially is beneficial for reducing over-fitting, but eventually over-regularization causes underfitting and hence performance decay. \cite{dohmatob2024tale} go beyond the linear model to study model collapse -- they study the tails of LLM outputs vs. real data and provide scaling laws that clearly identify regimes of model degradation when synthetic data misses \textit{tails} present in real data. They identify an interesting \textit{phase transition} in the test error scaling law depending on the size of the real dataset size in comparison to (a functional of) the chopped-off tail, and conclude that enough real data is able to mitigate model collapse. All these works consider the scenario where the amount of training data available per iteration is fixed (and does not grow with the iteration count), and it is certainly possible that with larger amount of synthetic data (from prediction by the previous model), several of these scalings would improve significantly. For example, in Equation (12) of \cite{dohmatob2024tale}, one obtains the linear scaling (with iteration count) of test error simply because the amount of synthetic data generated per iteration is the same. If one generated synthetic data with size proportional to the iteration count, then at iteration $n$, the scaling would, instead of $n$, be like $n^{1-c}/(1-c)$ for $c<1$. When one does not increase the dataset size, \cite{dohmatob2024tale} points out that increasing the proportion of real data would help one to avoid model collapse altogether. However, even if one \textit{did} increase the amount of synthetic data with iteration count, Theorem 3.2 coupled with Corollary 3.3 in \cite{dohmatob2024tale} would tell us that the \textit{amount} of real data was all that mattered -- if the amount of real data is large, we overcome model collapse. If one only had synthetic data (and no real data), no matter how large, it would be impossible to regain the original real-data scaling laws.
The scenario we study is highly inspired by these pioneering works, but still, in our view, different. We consider the case when we keep \textit{augmenting} synthetic data (generated by the previous model trained on all the previous data so far) as iterations progress, much akin to how -- in our view -- the internet evolves. We observe that we can avoid model collapse in this setting. The analysis of previous models in our case is more involved, since the data used for training at iteration $n$ is not homogeneous -- different models from the past impart different statistical aspects to different parts of the training data. We also note a related augmentation model studied by \cite{jain2024scaling} -- they perform risk minimization augmenting real data with synthetic data available from a potentially different independent source. One of their messages is that augmentation of (even) pure noise can be looked upon as adding a ridge penalty and hence, in certain cases, can improve test error. Their setup, however, is different from ours, since the synthetic data in their setup is not obtained by a learning algorithm employed on the real data, and the process is not iterative. However, morally, each iteration of ours involves risk minimization on data statistically composed of an equal mixture of data generated from the previous models, and hence each iteration of ours can be mapped to the general framework developed in \cite{jain2024scaling}, although the dependencies among the various models trained in our setup introduce theoretical complications that do not seem to be too easily addressed by the theory developed in \cite{jain2024scaling}. Shortly after v1 of our manuscript was uploaded to ArXiv, two other manuscripts appeared, dealing with the theoretical aspects in a setting similar to ours. Theorem 1 of \cite{marchi2024heat} obtains the same square summability scaling of the variance as us. \cite{seddik2024bad} studies collapse in language models in both purely synthetic and partly synthetic regimes and obtains deviation bounds as model iterations progress.

\begin{figure}[b!]
    \centering
    \includegraphics[trim={0 10cm 0 0},clip,width=\textwidth]{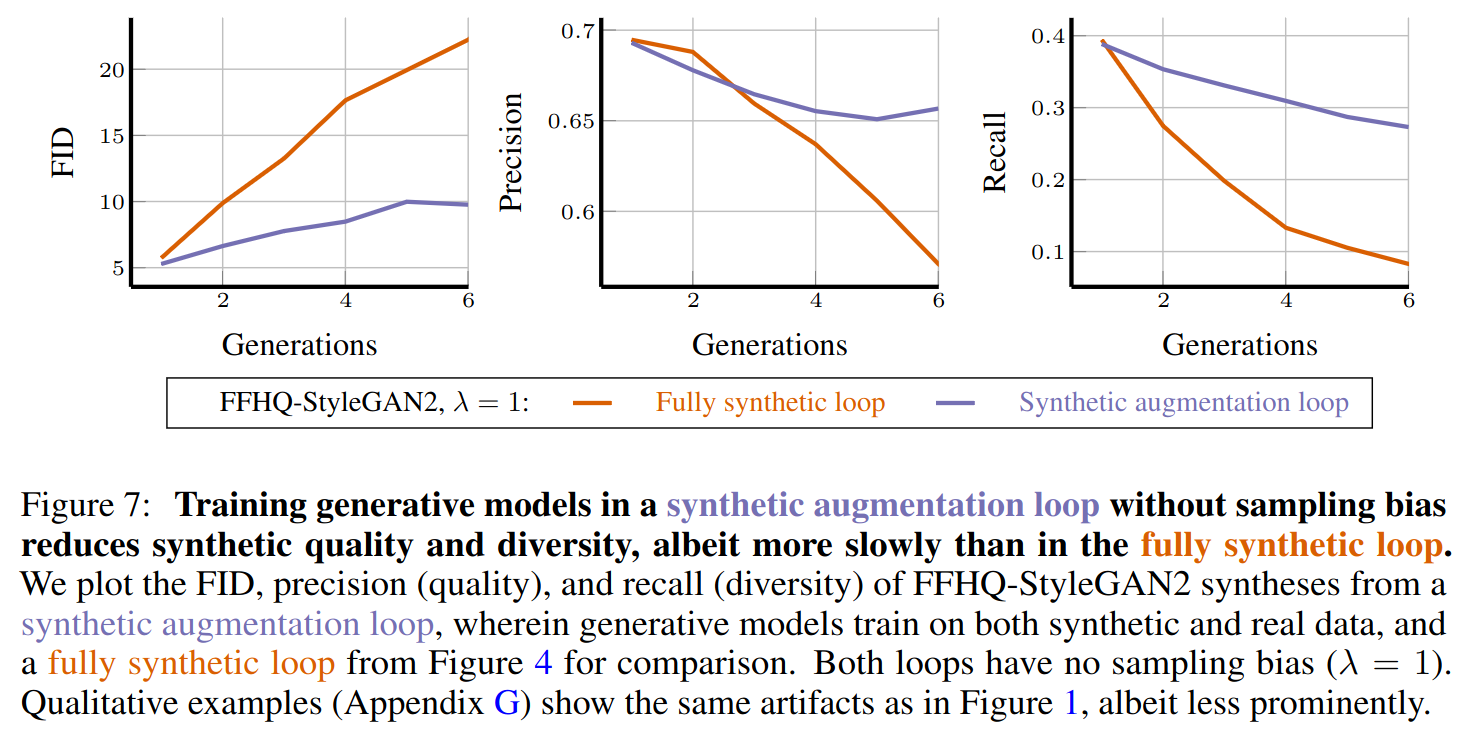}
    \caption{\textbf{Clarification of Data Accumulation in \citet{alemohammad2023self}}. Figure 7 from \citet{alemohammad2023self} (above) shows that linearly accumulating data (``Synthetic augmentation loop") causes poor behavior to plateau with the number of model-fitting iterations. \citet{alemohammad2023self} write, ``Our experiments [...] support our main conclusion [that] fixed real training data only delays the inevitable degradation of the quality or diversity of the generative models over generations." We believe is that our evidence and their evidence is more consistent with the conclusion that accumulating data \textit{avoids} model collapse and does not merely delay it.}
    \label{app:fig:responding_to_alemohammad2023selfconsuming}
\end{figure}

\paragraph{Considering Accumulating Data} The two papers we found that partially considered accumulating data are \cite{martinez2023combining} and \cite{alemohammad2023self}. \cite{alemohammad2023self} did so in one-half of one experiment: StyleGAN2 trained on FliqrFaces 128$\times$128 (App. Fig. \ref{app:fig:responding_to_alemohammad2023selfconsuming}). The authors concluded that accumulating data does not avoid model collapse, but merely slows it down. However, we believe that a closer examination of their results (App. Fig. \ref{app:fig:responding_to_alemohammad2023selfconsuming}) reveals that accumulating data causes the test error to plateau to a relatively low error with increasing numbers of model-fitting iterations. This result would support our conclusion that accumulating data avoids model collapse and does not merely delay it. The results from \citet{martinez2023combining} are harder to evaluate; 
model collapse only seems to occur when the amount of synthetic data added per model-fitting iteration is 2$\times$ the total amount of accumulated data, and the subsequent work by the authors switched from accumulating data to replacing data \citep{martinez2023towards}. We think understanding what conditions and why these discrepancies exist is an interesting future direction.

\paragraph{Avoiding Model Collapse} Several papers present methods for avoiding or slowing model collapse. \citet{bertrand2023stability} shows in the replacing data setting that model collapse will not occur if the initial generative models approximate the data distribution well enough and the proportion of real data is sufficiently large with respect to the synthetic data.
\citet{dohmatob2024tale} similarly demonstrates that in the replacing data setting, carefully selecting real data to mix with synthetic data can avoid model collapse. Other solutions may also be possible in various models and under various assumptions. To our knowledge, no paper has claimed an ``optimal" strategy to avoid model collapse, and neither has ours.

\clearpage
\section{Proofs of Mathematical Results}
\label{app:sec:proofs}

We point out a lemma useful to prove Theorem \ref{thm:warmup_ridgeless_isotropic}.
\begin{lemma}
Let $T$ and $d$ be positive integers with $T \geq d + 2$, and let $X \in \mathbb{R}^{T \times d}$ be a random matrix with i.i.d. rows from $\mathcal{N}(0, \Sigma)$ with $\Sigma$ positive definite. Then, $X$ has full rank a.s. Moreover, it holds that:
\begin{equation}
\mathbb{E}_X[(X^\top X)^{-1}] = \frac{1}{T - d - 1} \Sigma^{-1}.
\end{equation}
\label{lemma:tr_inv_cov_eq_prefactor_inv_cov}
\end{lemma}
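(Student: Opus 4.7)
The plan is to establish the lemma via a standard reduction-to-isotropic argument combined with a direct computation of one entry of the inverse-Wishart mean. For the full-column-rank claim: since the rows of $X$ are i.i.d.\ draws from the non-degenerate Gaussian $\mathcal{N}(0,\Sigma)$, the $Td$ entries of $X$ have an absolutely continuous joint distribution, and the event $\{\det(X^\top X) = 0\}$ lies in a proper algebraic subvariety of $\mathbb{R}^{Td}$, so it has Lebesgue (hence pushforward) measure zero.

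For the expectation formula, I would first reduce to the isotropic case by writing $X = Z \Sigma^{1/2}$, where $Z \in \mathbb{R}^{T \times d}$ has i.i.d.\ rows from $\mathcal{N}(0, I_d)$. Then $(X^\top X)^{-1} = \Sigma^{-1/2}(Z^\top Z)^{-1}\Sigma^{-1/2}$, and by linearity of expectation it suffices to show $\mathbb{E}[(Z^\top Z)^{-1}] = \frac{1}{T-d-1}\, I_d$. To handle this isotropic case, I would invoke a symmetry argument: for any orthogonal $Q \in O(d)$, $ZQ$ has the same distribution as $Z$ by rotational invariance of the standard Gaussian, so $Q^\top (Z^\top Z)^{-1} Q$ is equidistributed with $(Z^\top Z)^{-1}$. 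Taking expectations yields $Q^\top M Q = M$ for every orthogonal $Q$, where $M$ denotes $\mathbb{E}[(Z^\top Z)^{-1}]$. This forces $M = c \cdot I_d$ for some scalar $c \geq 0$, so it only remains to pin down $c$ by evaluating a single diagonal entry $\mathbb{E}\!\left[[(Z^\top Z)^{-1}]_{11}\right]$.

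For this last step I would partition $Z = [z_1 \;\; Z_2]$, where $z_1$ is the first column, and use the block-matrix inversion formula to write $[(Z^\top Z)^{-1}]_{11} = (z_1^\top (I_T - P_{Z_2}) z_1)^{-1}$, where $P_{Z_2}$ is the orthogonal projector onto the column span of $Z_2$. Conditioning on $Z_2$, the matrix $I_T - P_{Z_2}$ is almost surely a rank-$(T-d+1)$ projector (since $Z_2 \in \mathbb{R}^{T \times (d-1)}$ has full column rank by the same argument as above), and $z_1 \sim \mathcal{N}(0, I_T)$ is independent of $Z_2$; hence $z_1^\top (I_T - P_{Z_2}) z_1 \sim \chi^2_{T-d+1}$. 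Applying the standard identity $\mathbb{E}[1/\chi^2_k] = 1/(k-2)$ for $k > 2$ --- exactly where the assumption $T \geq d+2$ enters to guarantee integrability --- yields $c = 1/(T-d-1)$, completing the argument.

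The main obstacle is more bookkeeping than conceptual: carefully tracking the degrees-of-freedom count in the chi-squared step (ensuring the rank of $I_T - P_{Z_2}$ is $T-d+1$ and not $T-d$ or $T-d+2$), and checking the integrability condition $T-d+1 > 2$ so that $\mathbb{E}[1/\chi^2_{T-d+1}]$ is finite and tower-property interchanges of expectation and conditioning are justified. Once these are handled cleanly, everything else is a straightforward invariance and reduction chain.
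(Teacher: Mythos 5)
Your argument is correct, and it is genuinely more than what the paper provides: the paper's entire proof of this lemma is a citation to \citet{dohmatob2024model}, deferring to the standard fact that $X^\top X$ is Wishart-distributed and hence $(X^\top X)^{-1}$ is inverse-Wishart with mean $\Sigma^{-1}/(T-d-1)$. What you have written is essentially the classical self-contained derivation of that inverse-Wishart mean, and every step checks out: the zero set of the polynomial $\det(X^\top X)$ is a proper subvariety (nonempty complement since $T\ge d$), so the absolutely continuous law of the entries assigns it measure zero; the factorization $X = Z\Sigma^{1/2}$ correctly reduces to the isotropic case because $\Sigma^{-1/2}$ is deterministic and conjugation passes through the expectation; rotational invariance forces $\mathbb{E}[(Z^\top Z)^{-1}] = cI_d$; and the Schur-complement identity $[(Z^\top Z)^{-1}]_{11} = \bigl(z_1^\top(I_T - P_{Z_2})z_1\bigr)^{-1}$ together with the conditional $\chi^2_{T-d+1}$ distribution (the rank count $T-(d-1)$ is right, and $z_1$ is independent of $Z_2$ since the entries of $Z$ are i.i.d.) gives $c = 1/(T-d-1)$ via $\mathbb{E}[1/\chi^2_k]=1/(k-2)$, with $T\ge d+2$ entering exactly where you say it does to ensure $k>2$ and integrability. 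The only cosmetic remark is that the reduction step uses that constants pull out of the expectation rather than ``linearity'' per se, and one could note that positivity of $z_1^\top(I_T-P_{Z_2})z_1$ a.s.\ justifies the conditioning/tower argument you flag; neither affects correctness. Your version has the advantage of being verifiable without chasing the reference, at the cost of a page of bookkeeping the paper chose to outsource.
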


\begin{proof}
    See \citet{dohmatob2024model}.
\end{proof}

Assuming Lemma \ref{lemma:tr_inv_cov_eq_prefactor_inv_cov} and Theorem \ref{theorem:ridgeless_fitted_linear_parameters}, we present the proof of Theorem \ref{thm:warmup_ridgeless_isotropic}.

\begin{proof}[Proof of Theorem \ref{thm:warmup_ridgeless_isotropic}]
From Theorem \ref{theorem:ridgeless_fitted_linear_parameters}, we have:
\begin{equation}
\hat{w}_n = w^* + (X^\top X)^{-1} X^\top \left(\sum_{i=1}^n \frac{E_i}{i}\right)
\end{equation}
where $w^*$ is the true parameter, $X$ is the original data matrix, and $E_i$ are the noise terms at each iteration, with $E_i \sim \mathcal{N}(0, \sigma^2 I_T)$. The test error is given by:
\begin{equation}
E_{\text{test}}(\hat{w}_n) = \mathbb{E}[||\hat{w}_n - w^*||_{\Sigma}^2]
\end{equation}where the expectation is taken over all random quantities involved.

Substituting $\hat{w}_n$ into the test error expression and using the fact that $\Sigma \defeq I_d$, we get:
\begin{align*}
E_{\text{test}}(\hat{w}_n) &= \mathbb{E}\left[\left(\sum_{i=1}^n \frac{E_i}{i}\right)^\top X(X^\top X)^{-2} X^\top \left(\sum_{i=1}^n \frac{E_i}{i}\right)\right] \\
&= \mathbb{E}\left[\sum_{i=1}^n \frac{\sigma^2}{i^2} \text{tr}(X(X^\top X)^{-2} X^\top)\right] \\
&= \sum_{i=1}^n \frac{\sigma^2}{i^2} \mathbb{E}\left[\text{tr}((X^\top X)^{-1})\right]
\end{align*}

Using Lemma \ref{lemma:tr_inv_cov_eq_prefactor_inv_cov}, we have:
\begin{equation}
\mathbb{E}_{X}\left[\text{tr}((X^\top X)^{-1})\right] = \frac{d}{T-d-1}
\end{equation}

Therefore, the test error for ridgeless regression with isotropic features in the data accumulation setting is:
\begin{align*}
E_{\text{test}}(\hat{w}_n) &= \sum_{i=1}^n \frac{\sigma^2}{i^2} \cdot \frac{d}{T-d-1} < \frac{\sigma^2 d}{T-d-1} \left(\frac{\pi^2}{6}\right)
\end{align*}
as $\sum_{i=1}^n i^{-2} < \sum_{i=1}^\infty i^{-2} = \pi^2/6$.
\end{proof}

Finally, we prove Theorem \ref{theorem:ridgeless_fitted_linear_parameters}.

\begin{proof}[Proof of Theorem \ref{theorem:ridgeless_fitted_linear_parameters}]

We prove this theorem by induction.

\textbf{Base case:} For $n=1$, we have:
\begin{align*}
\hat{w}_1 &= \tilde{X}_1^{\dagger} \tilde{Y}_1 = (X^\top X)^{-1} X^\top (Xw^* + E_1) = w^* + (X^\top X)^{-1} X^\top E_1
\end{align*}
which satisfies the lemma.

\textbf{Inductive step:} Assume that for some $n \geq 1$, we have:

\begin{equation*}
\hat{w}_n = w^* + (X^\top X)^{-1} X^\top \left(\sum_{i=1}^n \frac{E_i}{i}\right)
\end{equation*}

Now, consider $\hat{w}_{n+1}$:
\begin{align*}
\hat{w}_{n+1} &= \tilde{X}_{n+1}^{\dagger} \tilde{Y}_{n+1} \\
&= (\tilde{X}_{n+1}^\top \tilde{X}_{n+1})^{-1} \tilde{X}_{n+1}^\top \tilde{Y}_{n+1} \\
&= \frac{1}{n+1}(X^\top X)^{-1} \sum_{i=1}^{n+1} X^\top \hat{Y}_i
\end{align*}

Recalling that $\hat{Y}_i$:
\begin{align*}
\hat{Y}_i = \begin{cases}
X w^* + E_1, & i = 1 \\
X \hat{w}_{i-1} + E_i, & 2 \leq i \leq n+1
\end{cases}
\end{align*}

Substituting this back into the expression for $\hat{w}_{n+1}$:
\begin{align*}
\hat{w}_{n+1} &= \frac{1}{n+1}(X^\top X)^{-1} \left(X^\top (X w^* + E_1) + \sum_{i=2}^{n+1} X^\top (X\hat{w}_{i-1} + E_i)\right) \\
&= \frac{1}{n+1}(X^\top X)^{-1} \left(X^\top Xw^* + X^\top E_1 + \sum_{i=2}^{n+1} (X^\top X\hat{w}_{i-1} + X^\top E_i)\right) \\
&= \frac{1}{n+1}(X^\top X)^{-1} \left(X^\top Xw^* + X^\top E_1 + \sum_{i=1}^{n} (X^\top X\hat{w}_i + X^\top E_{i+1})\right) \\
&= \frac{1}{n+1}(X^\top X)^{-1} \left(X^\top X w^* + \sum_{i=1}^{n} X^\top X\hat{w}_i + \sum_{i=1}^{n+1} X^\top E_i\right)
\end{align*}

Now, using the induction hypothesis:
\begin{align*}
\hat{w}_{n+1} &= \frac{1}{n+1}(X^\top X)^{-1} \left(X^\top Xw^* + \sum_{i=1}^{n} X^\top X\left(w^* + (X^\top X)^{-1} X^\top \sum_{j=1}^i \frac{E_j}{j}\right) + \sum_{i=1}^{n+1} X^\top E_i\right) \\
&= \frac{1}{n+1}(X^\top X)^{-1} \left((n+1)X^\top Xw^* + \sum_{i=1}^{n} X^\top X(X^\top X)^{-1} X^\top \sum_{j=1}^i \frac{E_j}{j} + \sum_{i=1}^{n+1} X^\top E_i\right) \\
&= w^* + \frac{1}{n+1}(X^\top X)^{-1} \left(\sum_{i=1}^{n} X^\top \sum_{j=1}^i \frac{E_j}{j} + \sum_{i=1}^{n+1} X^\top E_i\right) \\
&= w^* + \frac{1}{n+1}(X^\top X)^{-1} X^\top \left(\sum_{i=1}^{n} \sum_{j=1}^i \frac{E_j}{j} + \sum_{i=1}^{n+1} E_i\right)
\end{align*}

Now, we need to simplify the term $\sum_{i=1}^{n} \sum_{j=1}^i \frac{E_j}{j} + \sum_{i=1}^{n+1} E_i$. We can do this by counting the number of times each $E_i$ appears in the double sum: $E_1$ appears $n$ times in the double sum and once in the single sum, so its coefficient is $\frac{n+1}{1}$. $E_2$ appears $n-1$ times in the double sum and once in the single sum, so its coefficient is $\frac{n}{2}$. This continues along till we reach $E_n$, which appears once in the double sum and once in the single sum, so its coefficient is $\frac{2}{n}$. $E_{n+1}$ appears only once in the single sum, so its coefficient is $\frac{1}{n+1}$. Therefore,
\begin{align*}
\sum_{i=1}^{n} \sum_{j=1}^i \frac{E_j}{j} + \sum_{i=1}^{n+1} E_i &= \sum_{i=1}^{n+1} \frac{n+2-i}{i} E_i = (n+1) \sum_{i=1}^{n+1} \frac{E_i}{i}
\end{align*}

Substituting this back into the expression for $\hat{w}_{n+1}$:
\begin{align*}
\hat{w}_{n+1} &= w^* + \frac{1}{n+1}(X^\top X)^{-1} X^\top \left((n+1) \sum_{i=1}^{n+1} \frac{E_i}{i}\right) \\
&= w^* + (X^\top X)^{-1} X^\top \sum_{i=1}^{n+1} \frac{E_i}{i}
\end{align*}

Therefore, by mathematical induction, the lemma holds for all $n \geq 1$.
\end{proof}

\clearpage
\section{Additional Details and Ablations on Language Model Experiments}
\label{app:sec:lm_ablations}
\subsection*{Implementation Details}
Model training was implemented using Huggingface Transformers~\citep{wolf2019huggingface}. Dataset generation was implemented using vllm~\citep{kwon2023efficient}.

\subsection*{Additional Plots}
In addition to Figure~\ref{fig:language_modeling_results_learning_curves} in the main text, Figures~\ref{app:fig:language_modeling_results_learning_curves_epochs_lin}-\ref{app:fig:language_modeling_results_learning_curves_steps_log} show learning curves in larger print, with x-axes showing either epochs or gradient steps, and with axes shown in linear-linear or log-log scale, respectively.

\begin{figure}
    \centering
    \includegraphics[width=0.9\textwidth]{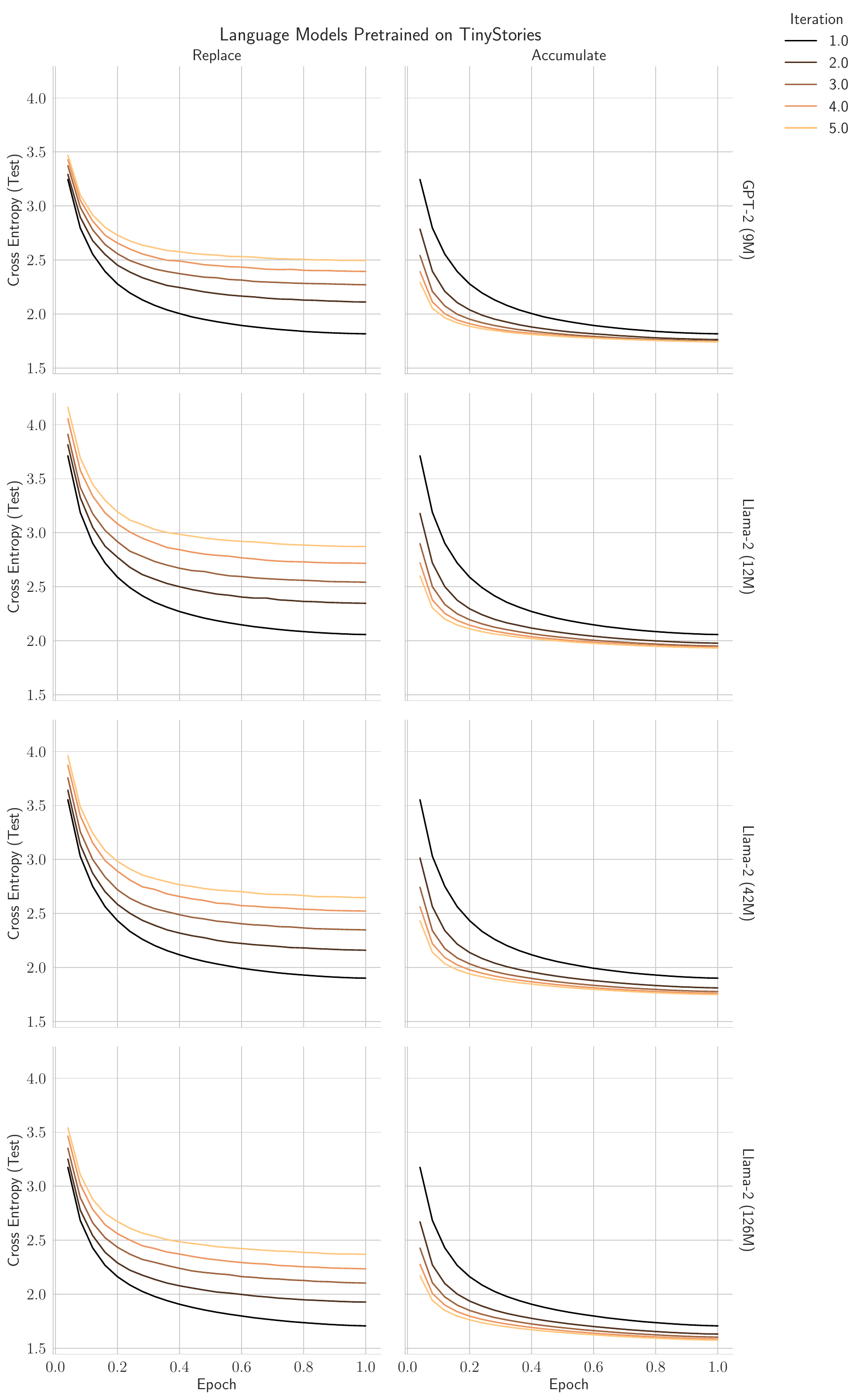}
    \caption{\textbf{Data Accumulation Avoids Model Collapse in Language Modeling.} Learning curves for individual model-fitting iterations when repeatedly \textit{replacing} data (left), and when \textit{accumulating} data (right). Note: Epochs correspond to more gradient steps for accumulate than replace because the number of training data grows for accumulate.
    }
    \label{app:fig:language_modeling_results_learning_curves_epochs_lin}
\end{figure}

\begin{figure}
    \centering
    \includegraphics[width=0.9\textwidth]{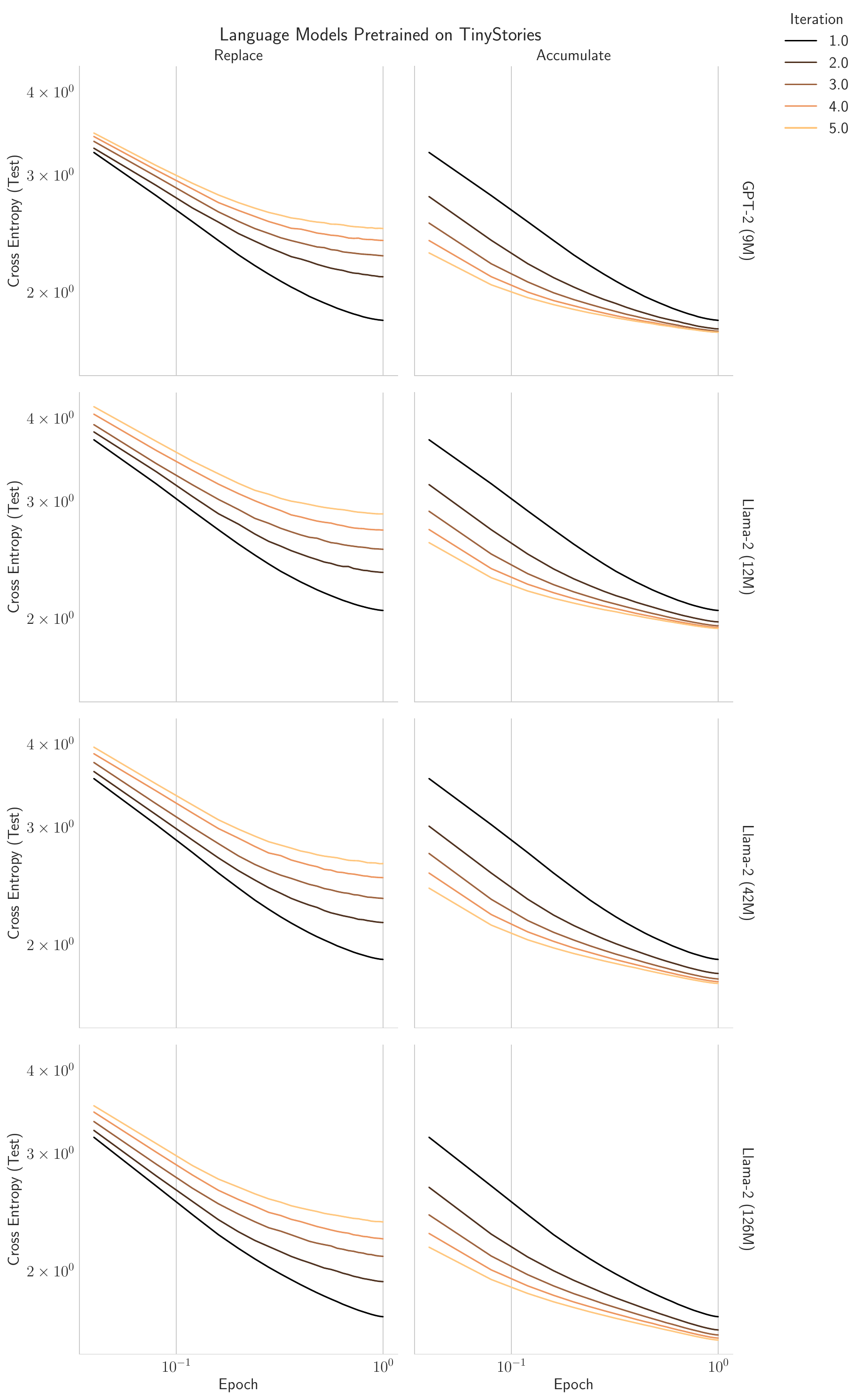}
    \caption{\textbf{Data Accumulation Avoids Model Collapse in Language Modeling.} Learning curves for individual model-fitting iterations when repeatedly \textit{replacing} data (left), and when \textit{accumulating} data (right), in log-log scale. Note: Epochs correspond to more gradient steps for accumulate than replace because the number of training data grows for accumulate.
    }
    \label{app:fig:language_modeling_results_learning_curves_epochs_log}
\end{figure}

\begin{figure}
    \centering
    \includegraphics[width=0.9\textwidth]{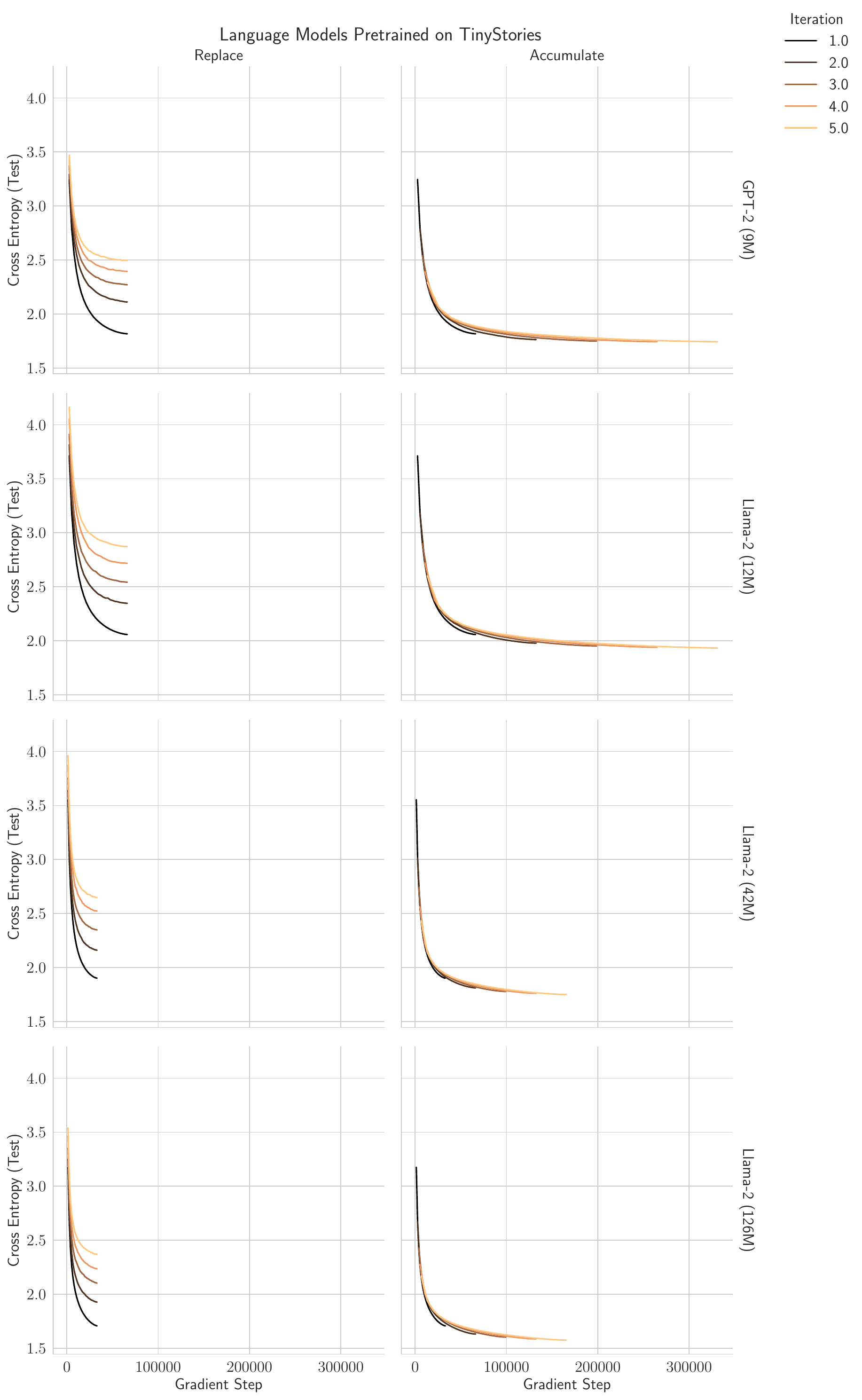}
    \caption{\textbf{Data Accumulation Avoids Model Collapse in Language Modeling.} Learning curves for individual model-fitting iterations when repeatedly \textit{replacing} data (left), and when \textit{accumulating} data (right).
    }
    \label{app:fig:language_modeling_results_learning_curves_steps_lin}
\end{figure}

\begin{figure}
    \centering
    \includegraphics[width=0.9\textwidth]{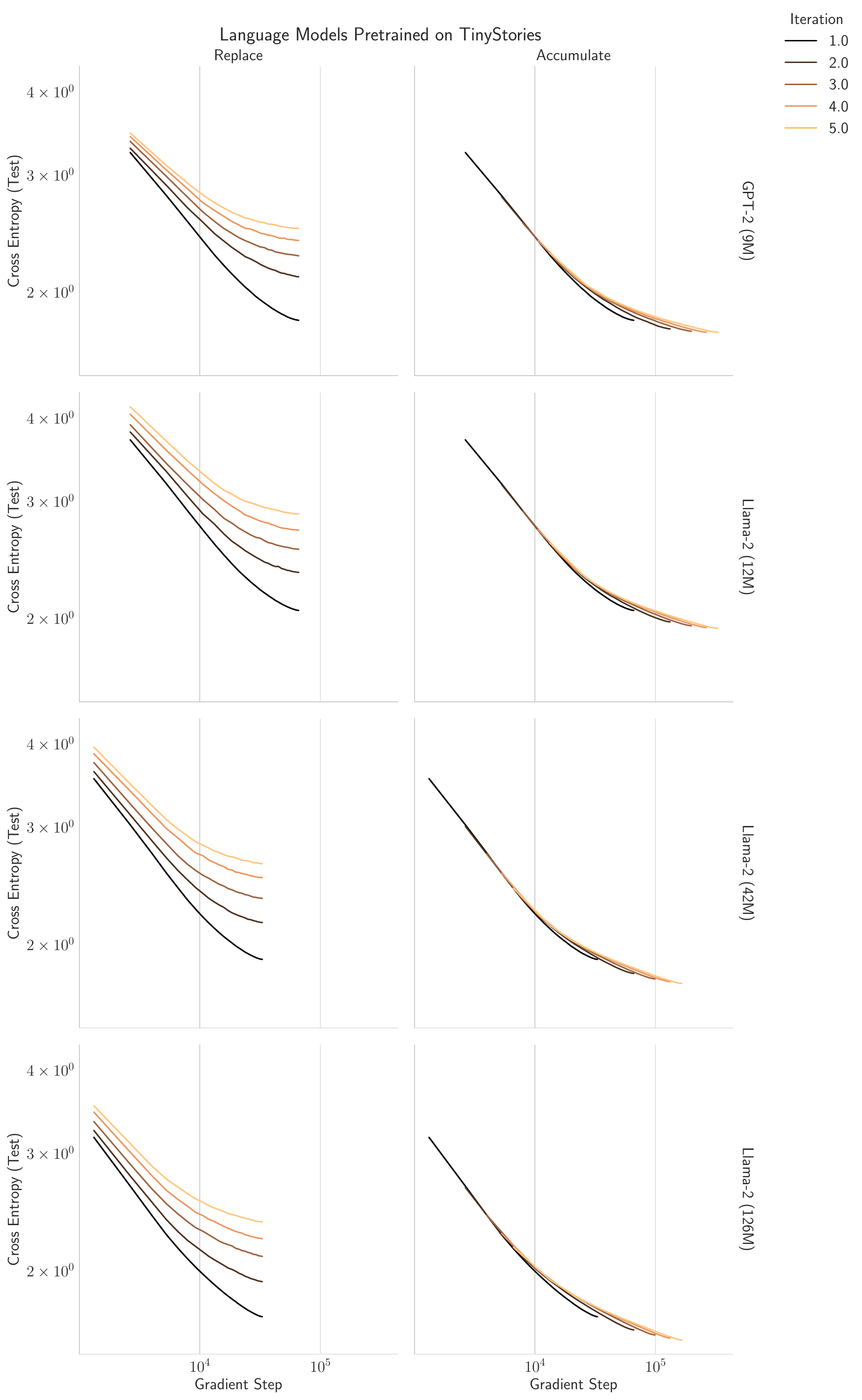}
    \caption{\textbf{Data Accumulation Avoids Model Collapse in Language Modeling.} Learning curves for individual model-fitting iterations when repeatedly \textit{replacing} data (left), and when \textit{accumulating} data (right), in log-log scale.
    }
    \label{app:fig:language_modeling_results_learning_curves_steps_log}
\end{figure}

\subsection*{Ablations}
In addition to the experiments shown in the main paper, we conducted several ablation studies.

\paragraph{Controlling for dataset size.} One possible concern is that when accumulating data, the train dataset size will grow at each model-fitting iteration, meaning subsequent models will be trained on more aggregate data than their counterparts in the replacement regime. To control for this, we run experiments controlling for this. In this ``replace-multiple'' regime, we create a fully synthetic dataset at the end of each model-fitting iteration, but grow the size of this dataset to match that of the accumulated data in the accumulation regime. Table~\ref{tab:lm_eval_loss} rightmost column shows that in this regime, evaluation loss still increases over model-fitting iterations.

\paragraph{Generation temperature.} Most of our language model experiments were run with sampling temperature $1.0$ during generation of new datasets. To ensure that this choice is not critical, we also run one experiment with temperature $0.3$, and see that this shows similar results (with even larger increases in validation loss in the replacement regime than temperature $1.0$), as shown in Table~\ref{tab:lm_eval_loss}, row 2, and Figure~\ref{fig:language_modeling_temp}.

\paragraph{Dataset size and training epochs.} We similarly vary the size of the initial (and subsequent) training datasets and number of training epochs, and see that this has no qualitative effect on the results (Table~\ref{tab:lm_eval_loss}, rows 3 \& 4 show training on 1/5th of the TinyStories dataset for 1 \& 3 epochs, respectively).

\paragraph{Model quality after first model-fitting iteration.} Finally, we control specifically for model (and thus synthetic dataset) quality after the first iteration, to rule out an undue influence of a ``bad'' first synthetic dataset on subsequent training. Figure~\ref{fig:language_modeling_firstiter} shows performance in subsequent iterations for different amounts of training in the first iteration, showing no qualitative differences.

\begin{table}[h!]
    \centering
    \begin{tabular}{c|c|c|c|c|c}
        Model & t=1 & t=4 (acc) & t=4 (repl) & t=10 (repl) & t=4 (*) \\
        \hline 
GPT-2 (9M) & 1.82 & 1.74 (-0.07) & 2.39 (+0.58) & 2.91 (+1.09) & 2.18 (+0.36) \\
GPT-2 (9M) (temp=0.3) & 1.82 & 1.75 (-0.06) & 5.82 (+4.00) & 9.85 (+8.04) & n/a \\
GPT-2 (9M) (small dataset) & 2.56 & 2.28 (-0.28) & 3.21 (+0.65) & 3.72 (+1.16) & 2.91 (+0.35) \\
ibid (+ 3 epochs) & 1.99 & 1.87 (-0.12) & 2.62 (+0.63) & n/a & n/a \\
Llama-2 (12M) & 2.06 & 1.94 (-0.12) & 2.72 (+0.66) & n/a & n/a \\
Llama-2 (42M) & 1.90 & 1.76 (-0.14) & 2.52 (+0.62) & n/a & n/a \\
Llama-2 (126M) & 1.71 & 1.59 (-0.12) & 2.23 (+0.53) & n/a & n/a \\
    \end{tabular}
    \caption{Evaluation cross-entropy loss for different models at model-fitting iterations 1, 4 and 10 for replacement and accumulation regimes. (*) indicates a replacement regime with growing dataset size to ablate for total train set size.}
    \label{tab:lm_eval_loss}
\end{table}

\begin{figure}
    \centering
    \includegraphics[width=0.7\textwidth]{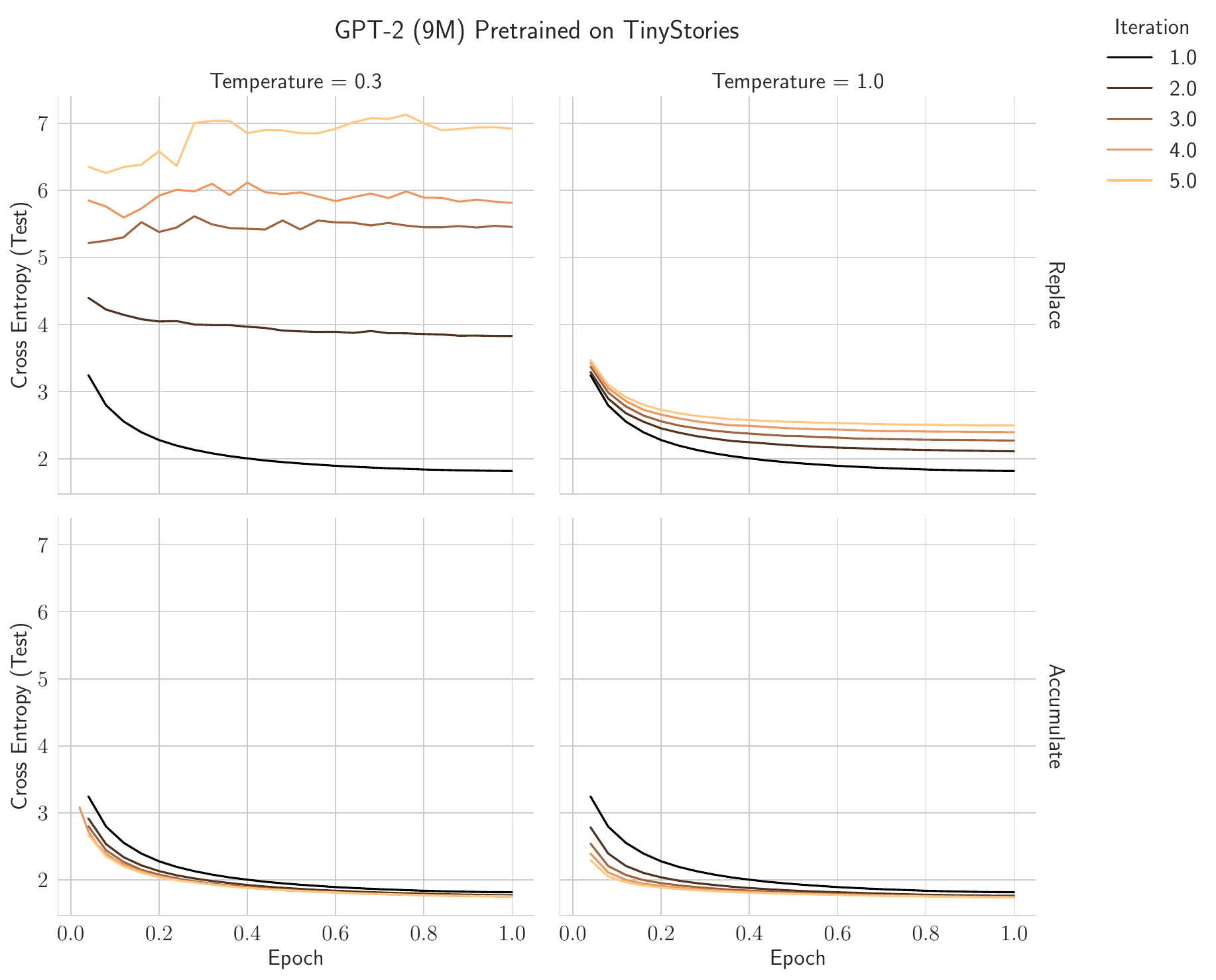}
    \caption{Accumulating data shows stable behavior across different generation temperatures for a GPT-2 (9M) model, while replacing data does not.
    }
    \label{fig:language_modeling_temp}
\end{figure}

\begin{figure}
    \centering
    \includegraphics[width=\textwidth]{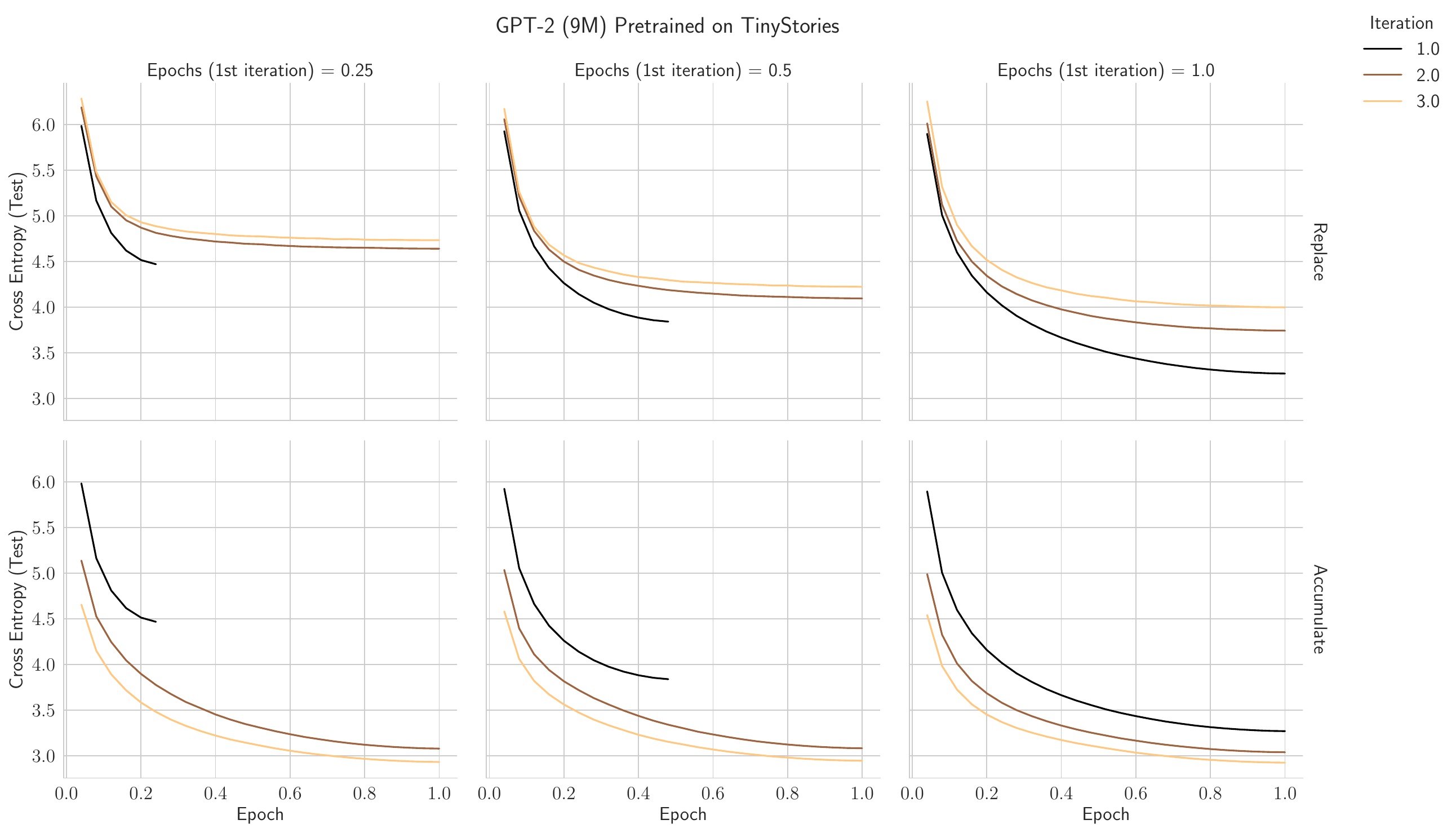}
    \caption{Model quality after the first model-fitting iteration does not qualitatively change behavior in subsequent iterations. Columns show differing training amount (as measure by epochs) in first iteration.
    }
    \label{fig:language_modeling_firstiter}
\end{figure}

\clearpage
\section{Additional Details on VAE Experiments}
\label{app:sec:vae_ablations}
\paragraph{Experiment Details.}
As pre-processing, we crop and down-sample the images to 64x64 pixels. We use a standard convolutional architecture for the VAE model consisting of 5 convolutional layers with 32, 64, 128, 256, and 512 channels, respectively, and a similar convolutional decoder structure. The latent space is 128-dimensional isotropic Gaussian, represented by 2 MLP layers. Each data iteration consists of 100 training epochs, after which we generate 163K new training images by sampling latents from the Gaussian prior and the passing them through the generator model.

\paragraph{Analysis of Reconstructions.}
Figure~\ref{fig:vae_reconstructions} shows reconstructions after replacing (left) and accumulating (center) data, compared to baseline (right).
Analyzing the reconstruction of test set images also reveals interesting findings - the model trained only on data from the prior iteration has indeed collapsed and cannot represent any other classes besides the single mode it generates. Interestingly, the model trained on aggregated data still maintains it's capabilities and generates accurate reconstructions, including smaller details such as glasses and hats. We hypothesize that this model maintains it's generative capabilities, but these details become a more minor sub-manifold in the latent space, which is realigned with the newly-generated data, hence why they appear less often in the generated images, which use samples from the prior.

\begin{figure}[h!]
    \centering
    \includegraphics[width=0.325\textwidth]{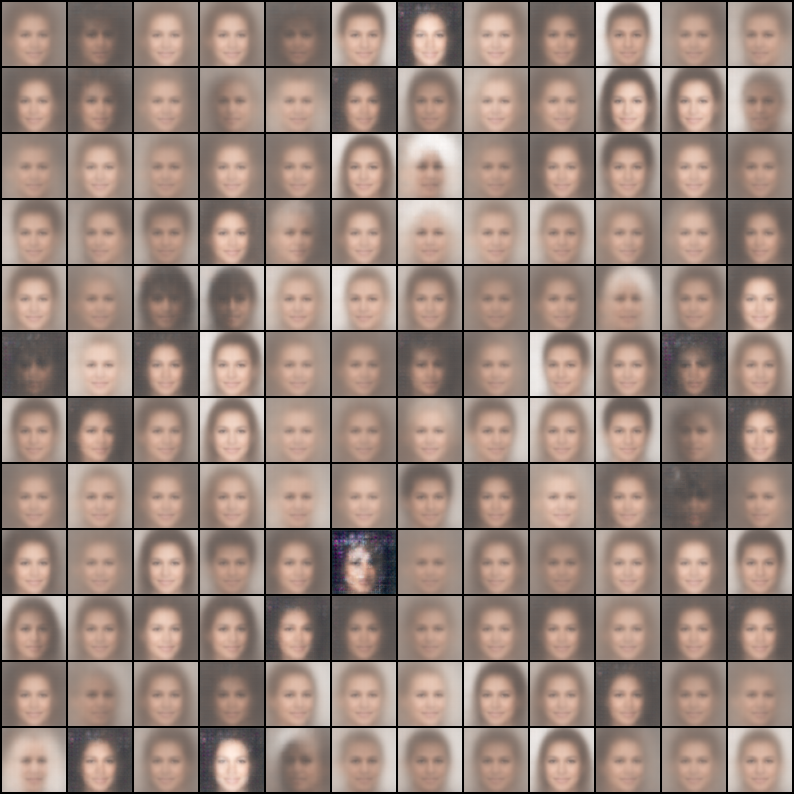}
    \includegraphics[width=0.325\textwidth]{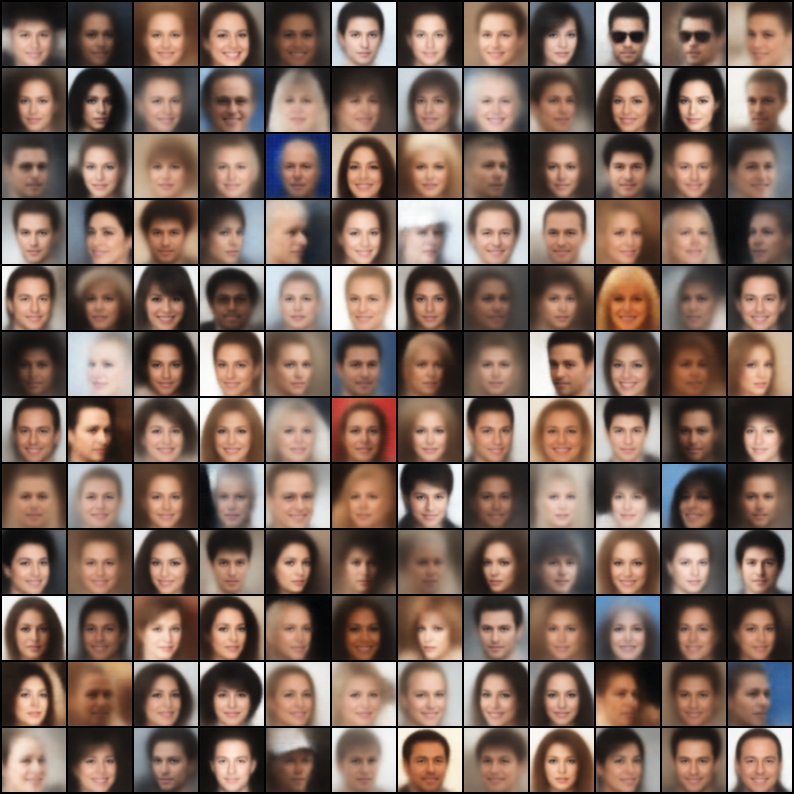}
    \includegraphics[width=0.325\textwidth]{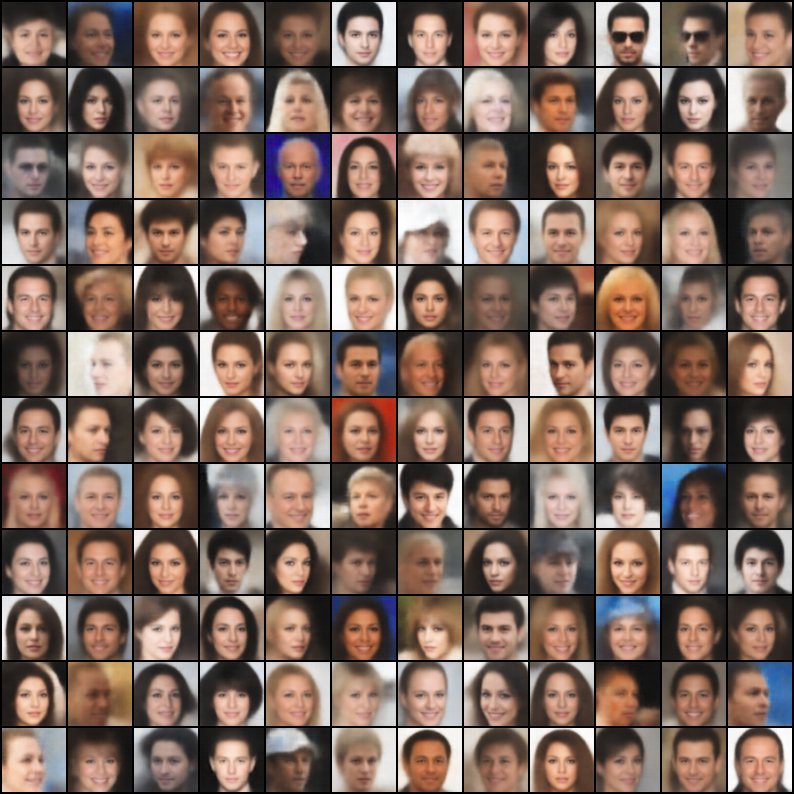}

    \caption{\textbf{Data Accumulation Maintains Model Capabilities.} Image reconstructions from the test set. Left: Training on prior iterations collapses the model's capability, and subsequently, it can only represent a single mode. Middle: training on aggregated data preserves model capabilities and leads to little to no degradation in the reconstructed images. Right: Baseline reconstructions after 100 training epochs on the dataset.}
    \label{fig:vae_reconstructions}
\end{figure}

\clearpage
\section{Linear Regression: Replacing Data with Increasing Sample Size}
\label{app:sec:linear_reg_replace_multiple}

In the framework of \citet{mobahi2020self} and \citet{dohmatob2024model}, we consider sequences of linear models fit to the previous model's synthetic outputs. Within this framework, \citet{dohmatob2024model} proved that \replace{if data are replaced} with each model fitting iteration and the training data cardinality remains constant, \replace{then the test squared error scales linearly with the number of model fitting iterations $n$}:

\begin{equation}
E_{\text{test}}^{\text{Replace}}(\hat{w}_n) = \frac{\sigma^2 d}{T - d -1} \times \color{red}n \color{black}
\end{equation}

In this work, we lightly adapt the argument of \citet{dohmatob2024model} to study the effects \accumulate{if data accumulate} with each model fitting iteration. We specifically considered the case where the training data cardinality increases by a constant $T$ with each model-fitting iteration i.e. the $i$th model is fit using $T\times i$ data, where $T$ data are ``real" and then each subsequently fit model contributes its own $T$ synthetic data to the accumulating data. In this setting, \accumulate{the test squared error is upper bounded independent of the number of iterations}.

\begin{equation}
E_{\text{test}}^{\text{Accumulate}}(\hat{w}_n) = \frac{\sigma^2 d}{T - d -1} \times \sum_{k=1}^n \dfrac{1}{k^2} \leq \frac{\sigma^2 d}{T - d -1} \times \color{red}\frac{\pi^2}{6} \color{black}
\end{equation}

In the main text, we focus on the replace and accumulate data settings because prior work focused on replacing data and we wished to study how accumulating data affects model collapse. However, a much richer landscape of outcomes is possible. For instance, and as pointed out in personal correspondence with \citet{dohmatob2024model}, one can consider what we term the \replacemultiple{``Replace-Multiple"} setting, in which one fits the $i$-th linear model using $T\times i$ data sampled from the $(i-1)$-th linear model. Replace-Multiple is a useful baseline for Accumulate because it matches the amount of training data at each model fitting iteration. Under \replacemultiple{Replace-Multiple, the test squared error grows logarithmically}:

\begin{equation}
E_{\text{test}}^{\text{Replace-Multiple}}(\hat{w}_n) = \frac{\sigma^2 d}{T - d -1} \times \sum_{k=1}^n \dfrac{1}{k} \approx \frac{\sigma^2 d}{T - d -1} \times \color{red} \log(n) \color{black}
\end{equation}

Replace-Multiple has the drawback of not matching the total amount of compute of Accumulate since each iteration of Replace-Multiple draws $T\times i$ samples from the most recent model, whereas Accumulate draws $T$ samples from the most recent model. 
Other baselines are also possible, but we leave these to future work. We focus on accumulating data as we feel real and synthetic data are likely to accumulate in the real world as time progresses.

\clearpage
\section{Additional Linear Regression Numerical Results}
\label{app:sec:linear_regression_numerics}

\begin{figure}[h!]
    \centering
    \includegraphics[width=\textwidth]{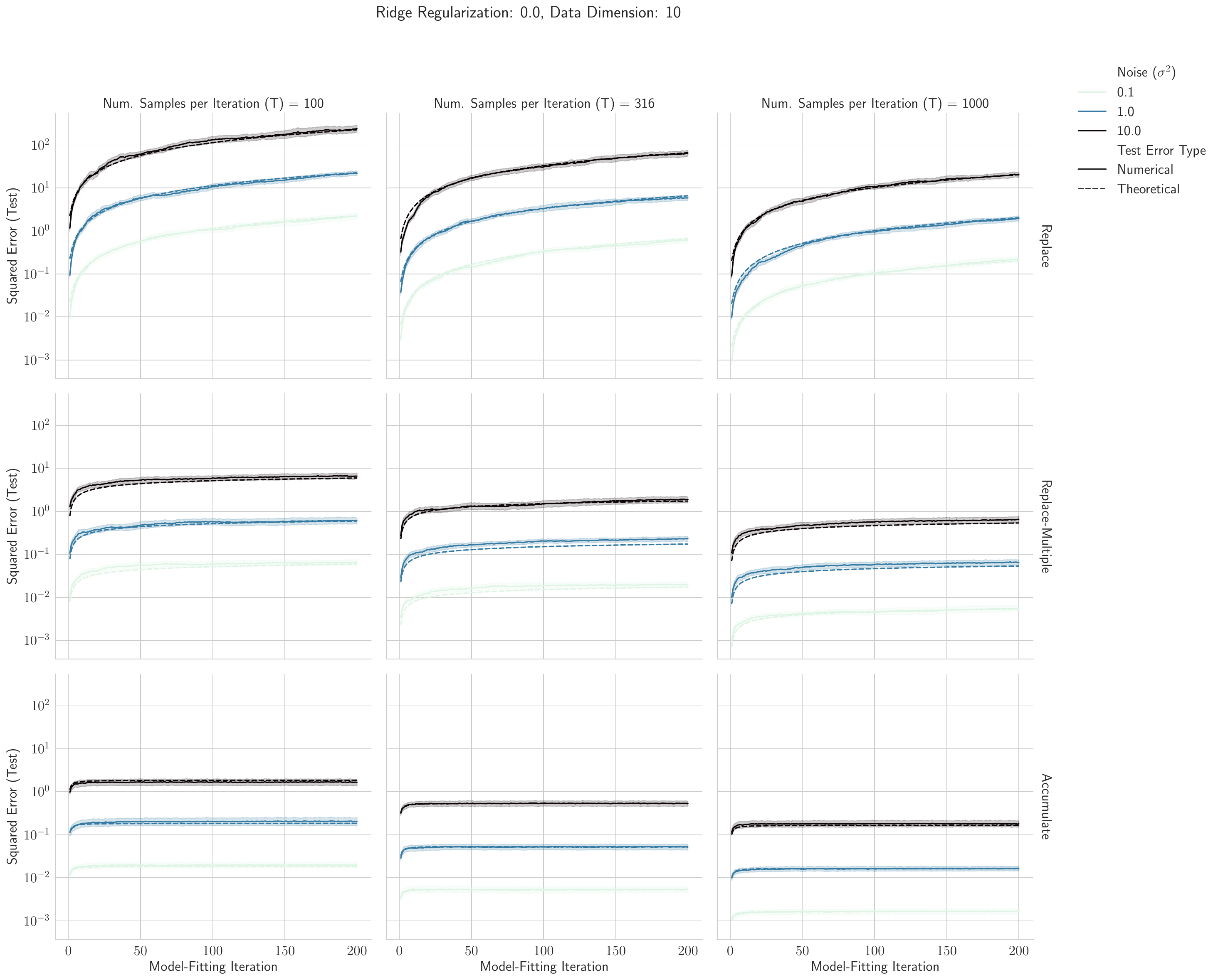}
    \caption{\textbf{Accumulating data across iterations avoids model collapse in linear regression.} We consider sequences of linear models recurrently fit to generated targets by previous iterations of models. \replace{Replace} (Top): If each linear model is fit to the generated targets of \textit{only} the preceding linear model i.e. data are replaced, then the \replace{test squared error grows linearly} with the number of model-fitting iterations iterations $n$. \replacemultiple{Replace-Multiple} (Middle): If each linear model is fit to $T\times i$ samples from the $(i-1)$-th model (i.e. the same amount of data as Accumulate), then the \replacemultiple{test squared error grows logarithmically} with the number of model-fitting iterations; see Appendix \ref{app:sec:linear_reg_replace_multiple} for more details. \accumulate{Accumulate} (Bottom): If each linear model is instead fit to the generate targets of \textit{all} the preceding linear models i.e. data accumulate, then the \accumulate{test squared error has a finite upper bound, independent of the number of iterations}. 
    This suggests that data accumulation might be a robust solution for mitigating model collapse. This figure is similar to Figure \ref{fig:accumulating_vs_nonaccumulating_isotropic_features} but displaying \textit{log} test squared error and more model-fitting iterations for additional clarity.}
    \label{app:fig:accumulating_vs_nonaccumulating_isotropic_features_log_linear}
\end{figure}

\newpage

\end{document}